\newtheorem{definition}{Definition}
\newtheorem{proposition}{Proposition}
\newtheorem{lemma}{Lemma}
\newtheorem{thm}{Theorem}
\newtheorem{assum}{Assumption}
\newcommand*\bigcdot{\mathpalette\bigcdot@{.5}}
\newcommand*\bigcdot@[2]{\mathbin{\vcenter{\hbox{\scalebox{#2}{$\m@th#1\bullet$}}}}}
\newcommand{\Ebb}{\mathbb{E}}
\newcommand{\gb}{\mathbf{g}}
\newcommand{\zb}{\mathbf{z}}
\newcommand{\wb}{\mathbf{w}}
\newcommand{\ub}{\mathbf{u}}
\newcommand{\vb}{\mathbf{v}}
\newcommand{\RR}{\mathds{R}}
\newcommand{\Oc}{O}
\newcommand{\prox}[1]{\mathrm{prox}_{#1}}
\newcommand{\zero}{\mathbf{0}}
\newcommand{\inner}[2]{\langle #1, #2 \rangle}
\newcommand{\norm}[1]{\| #1 \|}
\newcommand{\norml}[1]{\left\| #1 \right\|}
\begin{document}
%
\title{Understanding Generalization Error of SGD in Nonconvex Optimization}
\author{Yi Zhou\footnote{Corresponding author}\\ Duke University \\ Email: yi.zhou610@duke.edu \And Yingbin Liang\\
	The Ohio State University\\ Email: liang.889@osu.edu \And Huishuai Zhang \\ Microsoft Research, Asia \\ Email: huzhang@microsoft.com}
\maketitle
\begin{abstract}
The success of deep learning has led to a rising interest in the generalization property of the stochastic gradient descent (SGD) method, and stability is one popular approach to study it. Existing generalization bounds based on stability do not incorporate the interplay between the optimization of SGD and the underlying data distribution, and hence cannot even capture the effect of randomized labels on the generalization performance. 
In this paper, we establish generalization error bounds for SGD by characterizing the corresponding stability in terms of the on-average variance of the stochastic gradients. Such characterizations lead to improved bounds on the generalization error of SGD and experimentally explain the effect of the random labels on the generalization performance. 
We also study the regularized risk minimization problem with strongly convex regularizers, and obtain improved generalization error bounds for the proximal SGD. 
\end{abstract}

\section{Introduction}
Many machine learning applications can be formulated as risk minimization problems, in which each data sample $\zb\in \RR^p$ is assumed to be generated by an underlying multivariate distribution $\mathcal{D}$. The loss function $\ell(\cdot; \zb) : \RR^d \to \RR$ measures the performance on the sample $\zb$ and its form depends on specific applications, e.g., square loss for linear regression problems, logistic loss for classification problems and cross entropy loss for training deep neural networks, etc. The goal is to solve the following population risk  minimization (PRM) problem over a certain parameter space $\Omega \subset \RR^d$.
\begin{align*}
\min_{\wb \in \Omega} ~ f(\wb):= \Ebb_{\zb \sim \mathcal{D}} ~\ell(\wb; \zb). \tag{PRM}
\end{align*}  
Directly solving the PRM can be difficult in practice, as either the distribution $\mathcal{D}$ is unknown or evaluation of the expectation of the loss function induces high computational cost. To avoid such difficulties, one usually samples a set of $n$ data samples $S := \{\zb_1, \ldots, \zb_n \}$ from the distribution $\mathcal{D}$, and instead solves the following empirical risk minimization (ERM) problem.
\begin{align*}
\min_{\wb \in \Omega} ~ f_S(\wb):= \frac{1}{n} \sum_{k=1}^{n} \ell(\wb; \zb_k). \tag{ERM}
\end{align*}  
The ERM serves as an approximation of the PRM with finite samples. In particular, when the number $n$ of data samples is large, one wishes that the solution $\wb_S$ found by optimizing the ERM with the data set $S$ has a good generalization performance, i.e., it also induces a small loss on the population risk. The gap between these two risk functions is referred to as the {\em generalization error} at $\wb_S$, and is formally written as 
\begin{align}
	\text{(generalization error)}:= |f_S (\wb_S) - f(\wb_S)|. \label{eq: GE}
\end{align}

Various theoretical frameworks have been established to study the generalization error from different aspects (see related work for references).
This paper adopts the stability framework \cite{Bousquet_2002,Elisseeff_2005}, which has been applied to study the generalization property of the output produced by learning algorithms. More specifically, for a particular learning algorithm $\mathcal{A}$, its stability corresponds to how stable the output of the algorithm is with regard to the variations in the data set. As an example, consider two data sets $S$ and $\overline{S}$ that differ at one data sample, and denote $\wb_{S}$ and $\wb_{\overline{S}}$ as the outputs of algorithm $\mathcal{A}$ when applied to solve the ERM with the data sets $S$ and $\overline{S}$, respectively. Then, the stability of the algorithm measures the gap between the output function values of the algorithm on the perturbed data sets.

Recently, the stability framework has been further developed to study the generalization performance of the output produced by the stochastic gradient descent (SGD) method from various theoretical aspects \cite{Hardt_2016,Charles_2017,Mou_2017,Yin_2017,Kuzborskij_2017}. These studies showed that the output of SGD can achieve a vanishing generalization error after multiple passes over the data set as the sample size $n\to \infty$. These results provide theoretical justifications in part to the success of SGD on training complex objectives such as deep neural networks. 

However, as pointed out in \cite{Zhang2017}, these bounds do not explain some experimental observations, e.g., they do not capture the change of the generalization performance as the fraction of random labels in training data changes. Thus, the aim of this paper is to develop better generalization bounds that incorporate both the optimization information of SGD and the underlying data distribution, so that they can explain experimental  observations. We summarize our contributions as follows.


\subsection*{Our Contributions}

For smooth nonconvex optimization problems, we propose a new analysis of the on-average stability of SGD that exploits the optimization properties as well as the underlying data distribution. Specifically, via upper-bounding the on-average stability of SGD, we provide a novel generalization error bound, which improves upon the existing bounds by incorporating the on-average {\em variance} of the stochastic gradient. We further corroborate the connection of our bound to the generalization performance of the recent experiments in \cite{Zhang2017}, which were not explained by the existing bounds of the same type. In specific, our experiments demonstrate that the obtained generalization bound captures how the generalization error changes with the fraction of random labels via the on-average {\em variance} of SGD. Furthermore, our bound holds under probabilistic guarantee, which is statistically stronger than the bounds in expectation provided in, e.g., \cite{Hardt_2016,Kuzborskij_2017}. Then, we study nonconvex optimization under gradient dominance condition, and show that the corresponding generalization bound for SGD can be improved by its fast convergence rate.




We further consider nonconvex problems with strongly convex regularizers, and study the role that the regularization plays in characterizing the generalization error bound of the proximal SGD. In specific, our generalization bound shows that strongly convex regularizers substantially improve the generalization bound of SGD for {\em nonconvex} loss functions to be as good as the strongly convex loss function. Furthermore, the uniform stability of SGD under a strongly convex regularizer yields a generalization bound for {\em nonconvex} problems with exponential concentration in probability. We also provide some experimental observations to support our result.



\subsection*{Related Works}
The stability approach was initially proposed by \cite{Bousquet_2002} to study the generalization error, where various notions of stability were introduced to provide bounds on the generalization error with probabilistic guarantee. \cite{Elisseeff_2005} further extended the stability framework to characterize the generalization error of randomized learning algorithms. \cite{Shalev_2010} developed various properties of stability on learning problems. In \cite{Hardt_2016}, the authors first applied the stability framework to study the expected generalization error for SGD, and \cite{Kuzborskij_2017} further provided a data dependent generalization error bound. In \cite{Mou_2017}, the authors studied the generalization error of SGD with additive Gaussian noise. \cite{Yin_2017} studied the role that gradient diversity plays in characterizing the expected generalization error of SGD. All these works studied the expected generalization error of SGD. In \cite{Charles_2017}, the authors studied the generalization error of several first-order algorithms for loss functions satisfying the gradient dominance and the quadratic growth conditions. \cite{Poggio_2011} studied the stability of online learning algorithms. This paper improves the existing bounds by incorporating the on-average variance of SGD into the generalization error bound and further corroborates its connection to the generalization performance via experiments. More detailed comparison with the existing bounds are given after the presentation of main results.

The PAC Bayesian theory \cite{Valiant_1984,McAllester_1999} is another popular framework for studying the generalization error in machine learning. It was recently used to develop bounds on the generalization error of SGD \cite{London_2017,Mou_2017}. Specifically, \cite{Mou_2017} applied the PAC Bayesian theory to study the generalization error of SGD with additive Gaussian noise. \cite{London_2017} combined the stability framework with the PAC Bayesian theory and provided bound on the generalization error with probabilistic guarantee of SGD. The bound incorporates the divergence between the prior distribution and the posterior distribution of the parameters. 

Recently, \cite{Russo_2016,Xu_2017} applied information-theoretic tools to characterize the generalization capability of learning algorithms, and \cite{Pensia_2018} further extended the framework to study the generalization error of various first-order algorithms with noisy updates. 
Other approaches were also developed for characterizing the generalization error as well as the estimation error, which include, for example, the algorithm robustness framework \cite{Xu_2012,Zahavy_2017}, large margin theory \cite{Bartlett_2017,Neyshabur_2017,Sokolic_2017} and the classical VC theory \cite{Vapnik_1995,Vapnik_1998}. Also, some methods have been developed to study excessive risk of the output for a learning algorithm, which include the robust stochastic approach \cite{Nemirovski_2009}, the sample average approximation approach \cite{Shapiro_2005,Lin_2017}, etc.

\section{Preliminary and On-Average Stability}\label{sec:formulation}
Consider applying SGD to solve the empirical risk minimization (ERM) with a particular data set $S$. In particular, at each iteration $t$, the algorithm samples one data sample from the data set $S$ uniformly at random. Denote the index of the sampled data sample at the $t$-th iteration as ${\xi}_t$. Then, with a stepsize sequence $\{\alpha_t\}_t$ and a fixed initialization $\wb_{0} \in \RR^d$, the update rule of SGD can be written as, for $t = 0, \ldots, T-1$,
\begin{align}
\wb_{t+1} = \wb_{t} -  \alpha_t \nabla \ell (\wb_{t}; \zb_{\xi_t}). \tag{SGD}
\end{align}  
Throughout the paper, we denote the iterate sequence along the optimization path as $\{\wb_{t, S}\}_t$, where $S$ in the subscript indicates that the sequence is generated by the algorithm using the data set $S$. The stepsize sequence $\{\alpha_t\}_t$ is a decreasing and positive sequence, and typical choices for SGD are $\frac{1}{t}, \frac{1}{t\log t}$ \cite{Bottou_2010}, which we adopt in our study.

Clearly, the output $\wb_{T, S}$ is determined by the data set $S$ and the sample path $\bm{\xi}:=\{{\xi_1}, \ldots, {\xi_{T-1}} \}$ of SGD. 
We are interested in the generalization error of the $T$-th output generated by SGD, i.e., $|f_S (\wb_{T,S}) - f (\wb_{T,S})|$, and we adopt the following standard assumptions \cite{Hardt_2016,Kuzborskij_2017} on the loss function $\ell$ in our study throughout the paper.
\begin{assum}\label{assum: loss}
	For all $\zb \sim \mathcal{D}$, the loss function satisfies:
	\begin{enumerate}[leftmargin=*,topsep=0pt,noitemsep]
	\item Function $\ell(\cdot~;\zb)$ is continuously differentiable;
	\item Function $\ell(\cdot~;\zb)$ is nonnegative and $\sigma$-Lipschitz, and $|\ell(\cdot~;\zb)|$ is uniformly bounded by $M$; 
	\item The gradient $\nabla \ell(\cdot~;\zb)$ is $L$-Lipschitz, and $\norm{\nabla \ell(\cdot~;\zb)}$ is uniformly bounded by $\sigma$, where $\|\cdot\|$ denotes the $\ell_2$ norm.
	\end{enumerate}
\end{assum}
The generalization error of SGD can be viewed as a nonnegative random variable whose randomnesses are due to the draw of the data set $S$ and the sample path $\bm{\xi}$ of the algorithm. In particular, the mean square generalization error has been studied in \cite{Elisseeff_2005} for general randomized learning algorithms. Specifically, an application of [Lemma 11, \cite{Elisseeff_2005}] to SGD under \Cref{assum: loss} yields the following result. 
Throughout the paper, we denote $\overline{S}$ as the data set that replaces one data sample of $S$ with an i.i.d copy generated from the distribution $\mathcal{D}$ and denote $\wb_{T, \overline{S}}$ as the output of SGD for solving the ERM with the data set $\overline{S}$.
\begin{proposition}\label{thm: stability}
	Let \Cref{assum: loss} hold. Apply the SGD with the same sample path $\bm{\xi}$ to solve the ERM with the data sets $S$ and $\overline{S}$, respectively. Then, the mean square generalization error of SGD satisfies
	\begin{align}
		&\Ebb [|f_S (\wb_{T,S}) - f (\wb_{T,S})|^2] \le \frac{2M^2}{n}  + 12M\sigma \Ebb [\delta_{T,S,\overline{S}}], \nonumber
	\end{align}
	where $\delta_{T,S,\overline{S}} := \norm{\wb_{T, S} - \wb_{T, \overline{S}}}$ and the expectation is taken over the random variables $\overline{S}, S$ and $\bm{\xi}$. 
\end{proposition}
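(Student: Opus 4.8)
The plan is to reconstruct the exchangeability argument underlying [Lemma 11, Elisseeff\_2005], making explicit how the stability quantity $\delta_{T,S,\overline{S}}$ enters. First I would write the (signed) generalization error as an average of per-sample terms. Setting $\phi_i := \Ebb_{\zb}[\ell(\wb_{T,S};\zb)] - \ell(\wb_{T,S};\zb_i)$, we have $f(\wb_{T,S}) - f_S(\wb_{T,S}) = \frac{1}{n}\sum_{i=1}^n \phi_i$, so that
\[
\Ebb\big[|f_S(\wb_{T,S}) - f(\wb_{T,S})|^2\big] = \frac{1}{n^2}\sum_{i=1}^n \Ebb[\phi_i^2] + \frac{1}{n^2}\sum_{i\neq j}\Ebb[\phi_i\phi_j].
\]
Since $\ell$ is nonnegative and uniformly bounded by $M$ (\Cref{assum: loss}), both $\Ebb_{\zb}[\ell(\wb_{T,S};\zb)]$ and $\ell(\wb_{T,S};\zb_i)$ lie in $[0,M]$, hence $|\phi_i|\le M$ and the diagonal sum is at most $M^2/n$.

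The core of the argument is the off-diagonal sum. For a fixed pair $i\neq j$, I would introduce the decoupled output $\wb_{T,\overline{S}_i}$ obtained by running SGD with the \emph{same} sample path $\bm{\xi}$ on the data set $\overline{S}_i$ that replaces $\zb_i$ with an i.i.d.\ copy; the coupling through $\bm{\xi}$ is precisely what makes $\norm{\wb_{T,S} - \wb_{T,\overline{S}_i}} = \delta_{T,S,\overline{S}_i}$. Replacing $\wb_{T,S}$ by the common iterate $\wb_{T,\overline{S}_i}$ inside both $\phi_i$ and $\phi_j$ produces surrogates $\widetilde\phi_i,\widetilde\phi_j$; the $\sigma$-Lipschitz property changes each loss value by at most $\sigma\norm{\wb_{T,S}-\wb_{T,\overline{S}_i}}$, so $|\phi_i - \widetilde\phi_i|\le 2\sigma\delta_{T,S,\overline{S}_i}$ and $|\phi_j - \widetilde\phi_j|\le 2\sigma\delta_{T,S,\overline{S}_i}$. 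The key point is that $\Ebb[\widetilde\phi_i\widetilde\phi_j]=0$: because $\wb_{T,\overline{S}_i}$ does not depend on $\zb_i$, while $\zb_i$ and the test point $\zb$ are i.i.d.\ from $\mathcal{D}$, integrating out $\zb_i$ gives $\Ebb_{\zb_i}[\widetilde\phi_i\mid\text{rest}]=0$, and $\widetilde\phi_j$ (which depends only on $\wb_{T,\overline{S}_i}$ and $\zb_j$) is independent of $\zb_i$. Writing $\phi_i\phi_j = \widetilde\phi_i\widetilde\phi_j + \widetilde\phi_i(\phi_j-\widetilde\phi_j) + (\phi_i-\widetilde\phi_i)\phi_j$ and bounding the two residual products by $M\cdot 2\sigma\delta_{T,S,\overline{S}_i}$ using $|\widetilde\phi_i|,|\phi_j|\le M$ yields $|\Ebb[\phi_i\phi_j]| \le 4M\sigma\,\Ebb[\delta_{T,S,\overline{S}_i}]$.

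Finally I would assemble the two pieces. By exchangeability of the i.i.d.\ samples, $\Ebb[\delta_{T,S,\overline{S}_i}]$ does not depend on $i$ and equals $\Ebb[\delta_{T,S,\overline{S}}]$, so the off-diagonal sum is at most $\frac{1}{n^2}\,n(n-1)\cdot 4M\sigma\,\Ebb[\delta_{T,S,\overline{S}}] \le 4M\sigma\,\Ebb[\delta_{T,S,\overline{S}}]$; combined with the diagonal bound this gives an estimate of the announced form $\tfrac{c_1 M^2}{n} + c_2 M\sigma\,\Ebb[\delta_{T,S,\overline{S}}]$, the slightly larger constants $2$ and $12$ in the statement being absorbed by the cruder generic bounding in [Lemma 11, Elisseeff\_2005]. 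The step I expect to be the main obstacle is the conditional mean-zero computation: it requires carefully isolating the dependence on $\zb_i$, arguing that replacing $\wb_{T,S}$ by the decoupled iterate $\wb_{T,\overline{S}_i}$ renders the relevant factor independent of $\zb_i$, and holding the sample path $\bm{\xi}$ fixed throughout so that every residual error collapses exactly onto $\delta_{T,S,\overline{S}}$ rather than a larger perturbation quantity.
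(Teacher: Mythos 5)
Your proposal is correct, but it takes a genuinely different route from the paper. The paper's proof is essentially a two-line citation: it invokes [Lemma 11, Elisseeff\_2005] as a black box to get $\Ebb|f_S(\wb_{T,S})-f(\wb_{T,S})|^2 \le \frac{2M^2}{n} + \frac{12M}{n}\sum_{i=1}^n \Ebb\,|\ell(\wb_{T,S};\zb_i)-\ell(\wb_{T,S^i};\zb_i)|$, then applies the $\sigma$-Lipschitz property of $\ell$ to bound each loss difference by $\sigma\norm{\wb_{T,S}-\wb_{T,S^i}}$, and collapses the sum over $i$ using the same exchangeability observation you make at the end. You instead rederive the content of that lemma from scratch: the diagonal/off-diagonal second-moment split, the decoupling surrogate $\wb_{T,\overline{S}_i}$ run with the common sample path $\bm{\xi}$, and the conditional mean-zero identity $\Ebb_{\zb_i}[\widetilde\phi_i\mid \text{rest}]=0$. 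That identity is sound, and the one hypothesis it silently uses — that $\bm{\xi}$ is independent of the data values, so that $\wb_{T,\overline{S}_i}$ and $\widetilde\phi_j$ carry no dependence on $\zb_i$ — does hold here, since SGD samples indices uniformly regardless of the data; the same symmetry justifies the index-independence of $\Ebb[\delta_{T,S,\overline{S}_i}]$. What each approach buys: the paper's citation is shorter and reuses a result stated for general randomized algorithms, whereas your self-contained argument makes visible exactly where the range $[0,M]$, the Lipschitz constant $\sigma$, and the coupling through the fixed $\bm{\xi}$ enter — and it actually yields the sharper bound $\frac{M^2}{n}+4M\sigma\,\Ebb[\delta_{T,S,\overline{S}}]$, which implies the stated inequality with constants $2$ and $12$ to spare, confirming your remark that the paper's constants come from the cruder generic bounding in the cited lemma.
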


\Cref{thm: stability} links the mean square generalization error of SGD to the quantity $\Ebb_{\bm{\xi}, S, \overline{S}} [\delta_{T,S,\overline{S}}]$. Intuitively, $\delta_{T,S,\overline{S}}$ captures the variation of the algorithm output with regard to the variation of the dataset. Hence, its expectation can be understood as the {\em on-average stability} of the iterates generated by SGD. We note that similar notions of stabilities were proposed in \cite{Kuzborskij_2017,Shalev_2010,Elisseeff_2005}, which are based on the variation of the function values at the output instead.

\section{Generalization Bound for SGD in Nonconvex Optimization}\label{sec:gene}
In this section, we develop the generalization error of SGD by characterizing the corresponding on-average stability of the algorithm. 

An intrinsic quantity that affects the optimization path of SGD is the variance of the stochastic gradients. To capture the impact of the variance of the stochastic gradients, we adopt the following standard assumption from the stochastic optimization theory \cite{Bottou_2010,Nemirovski_2009,Ghadimi_2016}.
\begin{assum}\label{assum: var}
	For any fixed training set $S$ and any $\xi$ that is generated uniformly from $\{1, \ldots, n\}$ at random, there exists a constant $\nu_S > 0$ such that for all $\wb \in \Omega$ one has 
	\begin{align}\label{eq:gradvar}
	\Ebb_{\xi} \Big\|\nabla \ell (\wb; \zb_{\xi}) - \frac{1}{n} \sum_{k=1}^n \nabla \ell (\wb; \zb_{k})\Big\|^2 \le \nu_S^2.
	\end{align}
\end{assum}
\Cref{assum: var} essentially bounds the variance of the stochastic gradients for the particular data set $S$. The variance $\nu_S^2$ of the stochastic gradient is typically much smaller than the uniform upper bound $\sigma$ in \Cref{assum: loss} for the norm of the stochastic gradient, e.g., normal random variable has unit variance and is unbounded, and hence may provide a tighter bound on the generalization error. 


Based on \Cref{assum: var} and \Cref{thm: stability}, we obtain the following generalization bound of SGD by exploring its optimization path to study the corresponding stability.
\begin{thm}(Bound with Probabilistic Guarantee) \label{coro: prob_gene}
	Suppose $\ell$ is nonconvex. Let Assumptions \ref{assum: loss} and \ref{assum: var} hold. Apply the SGD to solve the ERM with the data set $S$, and choose the step size $\alpha_t = \frac{c}{(t+2)\log (t+2)}$ with $0<c<\frac{1}{L}$. Then, for any $\delta>0$, with probability at least $1-\delta$, we have
	\begin{align}
	&|f_S (\wb_{T,S}) - f (\wb_{T,S})| \nonumber\\
	&\le  \sqrt{\frac{1}{n\delta} \Big(2M^2+ 24M\sigma c\sqrt{2Lf(\wb_{0}) + \frac{1}{2}\Ebb_S[\nu_S^2]}  \log T\Big)}. \nonumber
	\end{align}
\end{thm}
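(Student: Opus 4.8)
The plan is to bound the on-average stability $\Ebb[\delta_{T,S,\overline{S}}]$ that appears in \Cref{thm: stability} and then pass from the resulting mean-square bound to a probabilistic one by Markov's inequality. Since $|f_S(\wb_{T,S})-f(\wb_{T,S})|^2$ is a nonnegative random variable, Markov's inequality guarantees that with probability at least $1-\delta$ it is at most $\frac{1}{\delta}\Ebb[|f_S(\wb_{T,S})-f(\wb_{T,S})|^2]$; taking square roots and substituting the estimate of \Cref{thm: stability} produces exactly the claimed form $\sqrt{\frac{1}{n\delta}(2M^2+\cdots)}$. Hence the entire content of the theorem reduces to proving $\Ebb[\delta_{T,S,\overline{S}}]\le \frac{2c}{n}\sqrt{2Lf(\wb_{0})+\frac{1}{2}\Ebb_S[\nu_S^2]}\,\log T$.

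To control $\delta_{t}:=\norm{\wb_{t,S}-\wb_{t,\overline{S}}}$ I would run the two trajectories with the \emph{same} sample path $\bm{\xi}$ and track the recursion induced by one update. Let $i^\ast$ be the index at which $S$ and $\overline{S}$ differ. When $\xi_t\neq i^\ast$ the two updates apply the identical gradient map $\wb\mapsto\wb-\alpha_t\nabla\ell(\wb;\zb_{\xi_t})$, which for an $L$-smooth (possibly nonconvex) loss is $(1+\alpha_t L)$-Lipschitz, giving $\delta_{t+1}\le(1+\alpha_t L)\delta_t$; since $\xi_t$ is uniform this event has probability $1-\tfrac1n$. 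When $\xi_t=i^\ast$ (probability $\tfrac1n$) the gradients are evaluated at different samples, and the triangle inequality yields $\delta_{t+1}\le\delta_t+\alpha_t(\norm{\nabla\ell(\wb_{t,S};\zb_{i^\ast})}+\norm{\nabla\ell(\wb_{t,\overline{S}};\zb_{i^\ast}')})$. Averaging over $\xi_t$ conditioned on the past gives the scalar recursion $\Ebb[\delta_{t+1}]\le(1+\alpha_t L)\Ebb[\delta_t]+\frac{\alpha_t}{n}\Ebb[\,\norm{\nabla\ell(\wb_{t,S};\zb_{i^\ast})}+\norm{\nabla\ell(\wb_{t,\overline{S}};\zb_{i^\ast}')}\,]$.

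The crucial and most delicate step is to replace the crude bound $\norm{\nabla\ell}\le\sigma$ in the perturbation term by a quantity that exposes the variance $\nu_S$. Because $i^\ast$ may be taken uniform over $\{1,\dots,n\}$ and independent of $\wb_{t,S}$, averaging over $i^\ast$ converts $\Ebb\norm{\nabla\ell(\wb_{t,S};\zb_{i^\ast})}$ into $\Ebb_\xi\norm{\nabla\ell(\wb_{t,S};\zb_\xi)}$, which by Jensen is at most $\sqrt{\Ebb_\xi\norm{\nabla\ell(\wb_{t,S};\zb_\xi)}^2}$. I would then invoke the self-bounding property of nonnegative $L$-smooth functions, $\norm{\nabla\ell(\wb;\zb)}^2\le 2L\ell(\wb;\zb)$, to get $\Ebb_\xi\norm{\nabla\ell(\wb_{t,S};\zb_\xi)}^2\le 2Lf_S(\wb_{t,S})$, and a one-step descent lemma $\Ebb[f_S(\wb_{t+1})\mid\mathcal{F}_t]\le f_S(\wb_t)-\alpha_t(1-\tfrac{L\alpha_t}{2})\norm{\nabla f_S(\wb_t)}^2+\tfrac{L\alpha_t^2}{2}\nu_S^2$, valid since $\alpha_t<\tfrac1L$, to telescope into $\Ebb_S[f_S(\wb_{t,S})]\le f(\wb_0)+\tfrac{L}{2}\Ebb_S[\nu_S^2]\sum_{s<t}\alpha_s^2$. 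Combining these with concavity of the square root yields the factor $\sqrt{2Lf(\wb_0)+\tfrac12\Ebb_S[\nu_S^2]}$, where the initial-energy term $2Lf(\wb_0)$ comes from $\Ebb_S f_S(\wb_0)=f(\wb_0)$ and the accumulated-variance term has coefficient fixed by the summability $L^2\sum_t\alpha_t^2=O(1)$ of the squared step sizes. I expect this bookkeeping, together with the symmetric treatment of the fresh sample $\zb_{i^\ast}'$ on the $\overline{S}$ trajectory (where the population risk must be related back to $f(\wb_0)$), to be the main obstacle.

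It then remains to unroll the linear recursion $\Ebb[\delta_{t+1}]\le(1+\alpha_t L)\Ebb[\delta_t]+\tfrac{B}{n}\alpha_t$ with $\delta_0=0$ and $B=2\sqrt{2Lf(\wb_0)+\tfrac12\Ebb_S[\nu_S^2]}$. The accumulated expansion factor obeys $\prod_{s}(1+\alpha_s L)\le\exp(L\sum_s\alpha_s)=(\log T)^{cL}\le\log T$, using $\sum_s\alpha_s=c\sum_s\frac{1}{(s+2)\log(s+2)}=O(\log\log T)$ together with $cL<1$; multiplying by $\sum_t\alpha_t=O(\log\log T)$ and absorbing the doubly logarithmic factors into $\log T$ gives $\Ebb[\delta_{T,S,\overline{S}}]\le\frac{2c}{n}\sqrt{2Lf(\wb_0)+\tfrac12\Ebb_S[\nu_S^2]}\,\log T$. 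Substituting this into \Cref{thm: stability} and applying the Markov step described above completes the proof.
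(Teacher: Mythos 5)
Your proposal is correct and follows essentially the same route as the paper's proof: the mean-square bound of \Cref{thm: stability}, the two-case recursion on $\Ebb[\delta_{t,S,\overline{S}}]$ (the paper's \Cref{lemma: GE_general}), the self-bounding inequality $\norm{\nabla \ell}^2 \le 2L\ell$ combined with a descent lemma and \Cref{assum: var} to get the $\sqrt{2Lf(\wb_0)+\tfrac12\Ebb_S[\nu_S^2]}$ factor (the paper's \Cref{lemma: grad_bound}), and finally Markov/Chebyshev on the squared error. The only cosmetic difference is your unrolling, which pulls the full product $\prod_s(1+\alpha_s L)$ out of the sum and absorbs an extra $\log\log T$ factor using $cL<1$, whereas the paper keeps the partial products $\prod_{k=t+1}^{T-1}(1+\alpha_k L)$ inside the sum to obtain a convergent series times $(\log T)^{cL}$ --- both land at the same $O(\log T)$ bound at the same level of rigor.
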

\begin{proof}[Outline of the Proof of \Cref{coro: prob_gene}] 
	We provide an outline of the proof here, and relegate the detailed proof in the supplementary materials. 
	
	The central idea is to bound the on-average stability $\Ebb_{S, \overline{S}, \bm{\xi}} [\delta_{T, S, \overline{S}}]$ of the iterates in \Cref{thm: stability}. Hence, suppose we apply SGD with the same sample path $\bm{\xi}$ to solve the ERM with the data sets $S$ and $\overline{S}$, respectively. We first obtain the following recursive property of the on-average iterate stability (\Cref{lemma: GE_general} in the appendix):
	\begin{align}
	\Ebb_{S, \overline{S}, \bm{\xi}} [\delta_{t+1, S, \overline{S}}] 
	&\le (1 + \alpha_t L) \Ebb_{S, \overline{S}, \bm{\xi}} [\delta_{t, S, \overline{S}}]  \nonumber\\
	&\quad+ \frac{2\alpha_t}{n}\Ebb_{S, \bm{\xi}} \left[\norml{\nabla \ell (\wb_{t, S}; \zb_{1})}\right]. \label{eq: recursive}
	\end{align} 
	
	We then further derive the following bound on $\Ebb_{S, \bm{\xi}} \left[\norm{\nabla \ell (\wb_{t, S}; \zb_{1})}\right] $ by exploiting the optimization path of SGD (\Cref{lemma: grad_bound} in the appendix):
	\begin{align}\label{eq:gradexp}
	&\Ebb_{\bm{\xi}, S} \left[\norm{\nabla \ell (\wb_{t, S}; \zb_{1})}\right] \le  \sqrt{2Lf(\wb_{0}) + \frac{1}{2}\Ebb_{S}[\nu_S^2]}. 
	\end{align}
	
	Substituting \cref{eq:gradexp} into \cref{eq: recursive} and telescoping, we obtain an upper bound on $\Ebb_{S, \overline{S}, \bm{\xi}} [\delta_{T, S, \overline{S}}] $. Further substituting such a bound into \Cref{thm: stability}, we obtain an upper bound on the second moment of the generalization error. Then, the result in \Cref{coro: prob_gene} follows from the Chebyshev's inequality.
\end{proof} 

The proof of \Cref{coro: prob_gene} is to characterize the on-average stability of SGD, and it explores the optimization path by applying the technical tools developed in stochastic optimization theory. Comparing to the generalization bound developed in \cite{Hardt_2016} that characterizes the expected generalization error based on the uniform stability $\sup_{S, \overline{S}}\Ebb_{\bm{\xi}} [\delta_{T,S,\overline{S}}]$, our generalization bound in \Cref{coro: prob_gene} provides a probabilistic guarantee, and is based on the more relaxed on-average stability $\Ebb_{S, \overline{S}}\Ebb_{\bm{\xi}} [\delta_{T,S,\overline{S}}]$ which yields a tighter bound. Intuitively, the on-average variance term ${\Ebb_{S}[\nu_S^2]}$ in our bound measures the `stability' of the stochastic gradients over all realizations of the dataset $S$. If such on-average variance of SGD is large, then the optimization paths of SGD on two slightly different datasets are diverse from each other, leading to the bad stability of SGD and in turn yielding a high generalization error. 
We note that \cite{Kuzborskij_2017} also exploited the optimization path to characterize the {\em expected} generalization error of SGD. However, their analysis assumes that the iterate $\wb_{t, S}$ is independent of $\zb_{\xi_{t+1}}$, which may not hold after multiple passes over the data samples. Also, their result does not capture the on-average variance of the stochastic gradients. 

We next explain how our generalization bound can explain observations in experiments. The generalization bound in \Cref{coro: prob_gene} depends on the on-average variance ${\Ebb_{S}[\nu_S^2]}$ of the stochastic gradients, which incorporates the underlying data distribution and can capture its effect on the generalization performance. We conduct several experiments to demonstrate that the on-average variance of the SGD does capture the generalization performance. For example, it has been observed that a dataset with true labels leads to good generalization performance whereas a dataset with random labels leads to bad generalization performance \cite{Zhang2017}. 
Following this observation, we perform three experiments: solving a logistic regression with the a9a dataset \cite{Chang_2011}, training a three-layer ReLU neural network with the MNIST dataset \cite{Lecun_1998} and training a Resnet-18 \cite{He2016} with the CIFAR10 dataset \cite{Krizhevsky09}. In specific, we vary fraction of random labels (i.e., vary the probability of replacing true labels to randomly selected labels) in the datasets and 
evaluate the on-average variance of SGD for the last multiple iterations of the training process. For neural network experiments, we terminate the training process when the training error is below $0.2\%$ for all settings of random label probability. 
Also, as the on-average variance is averaged over the data distribution, we adopt the corresponding sample mean over the random draw of the training dataset as an estimation. \Cref{fig: 1} shows our experimental results. For all three experiments with very different objective functions, it can be seen that the on-average variance consistently becomes larger as the fraction of random labels increases (i.e., the generalization error increases). Thus, our empirical study establishes an affirmative connection between the on-average variance (captured in our generalization bound) and the generalization performance in the experiments.
\begin{figure*}[th]
	\centering 
	\subfigure{
		\includegraphics[width=2in]{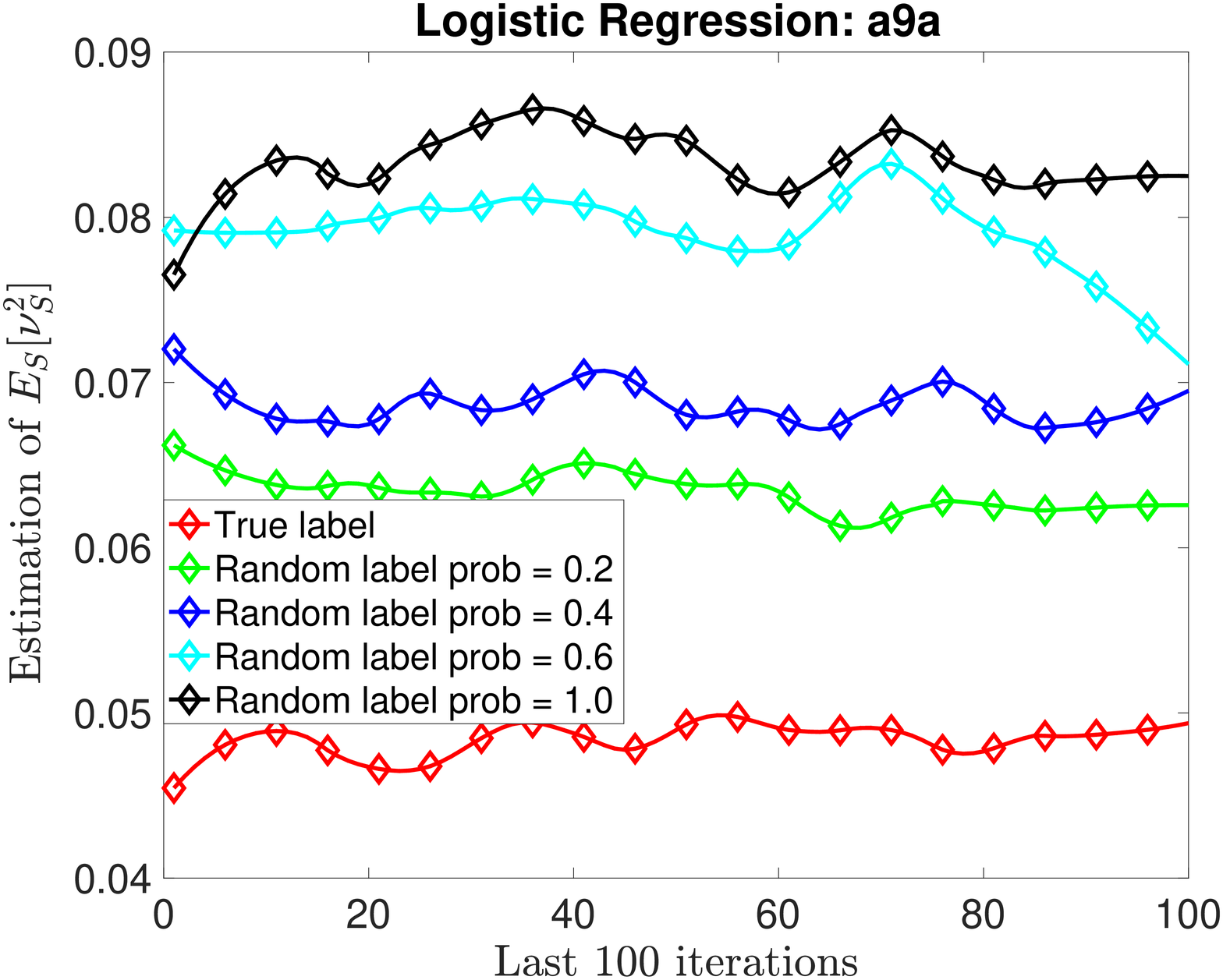}
	}
	\subfigure{
		\includegraphics[width=2in]{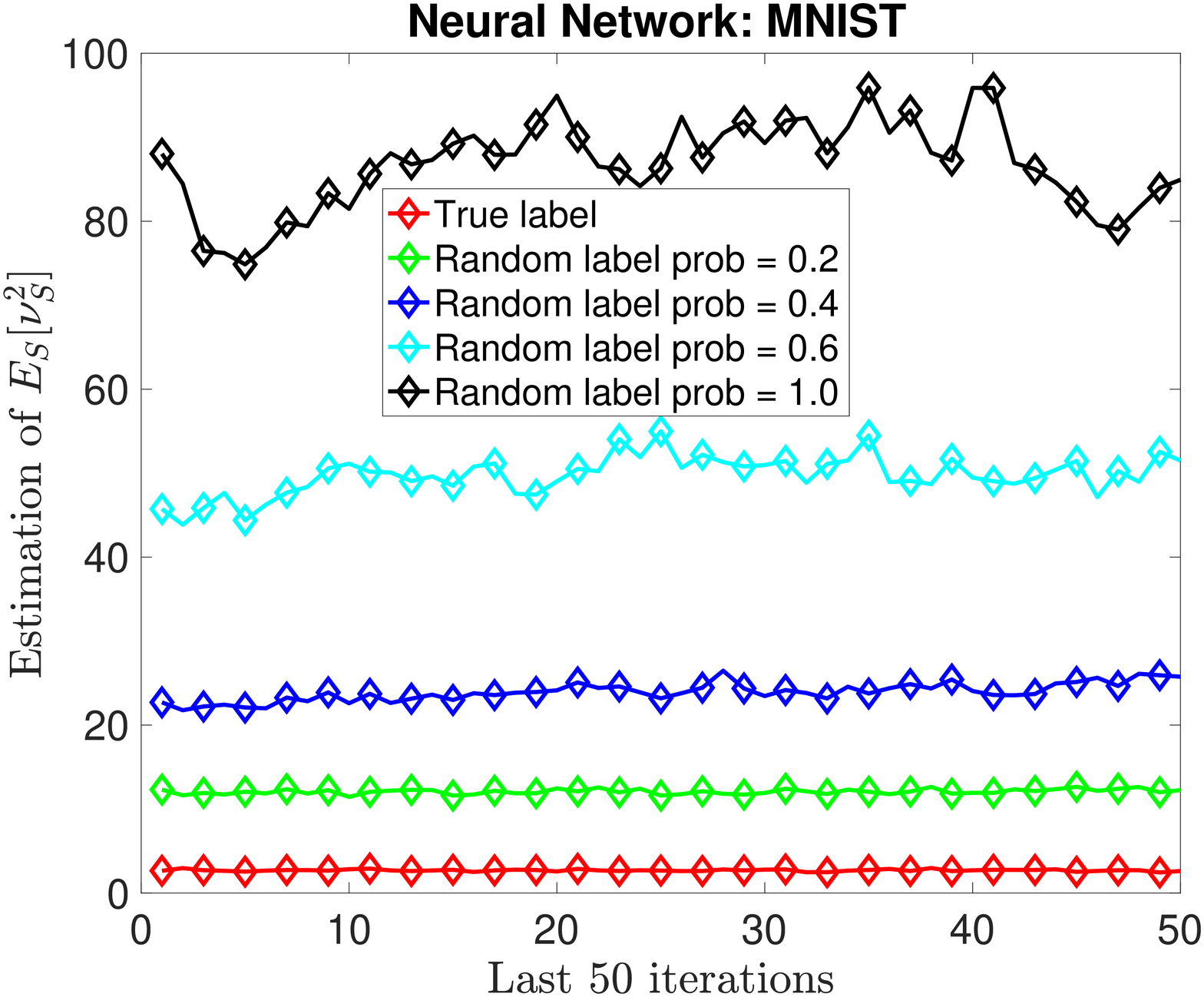}
	}
\subfigure{
	\includegraphics[width=2in]{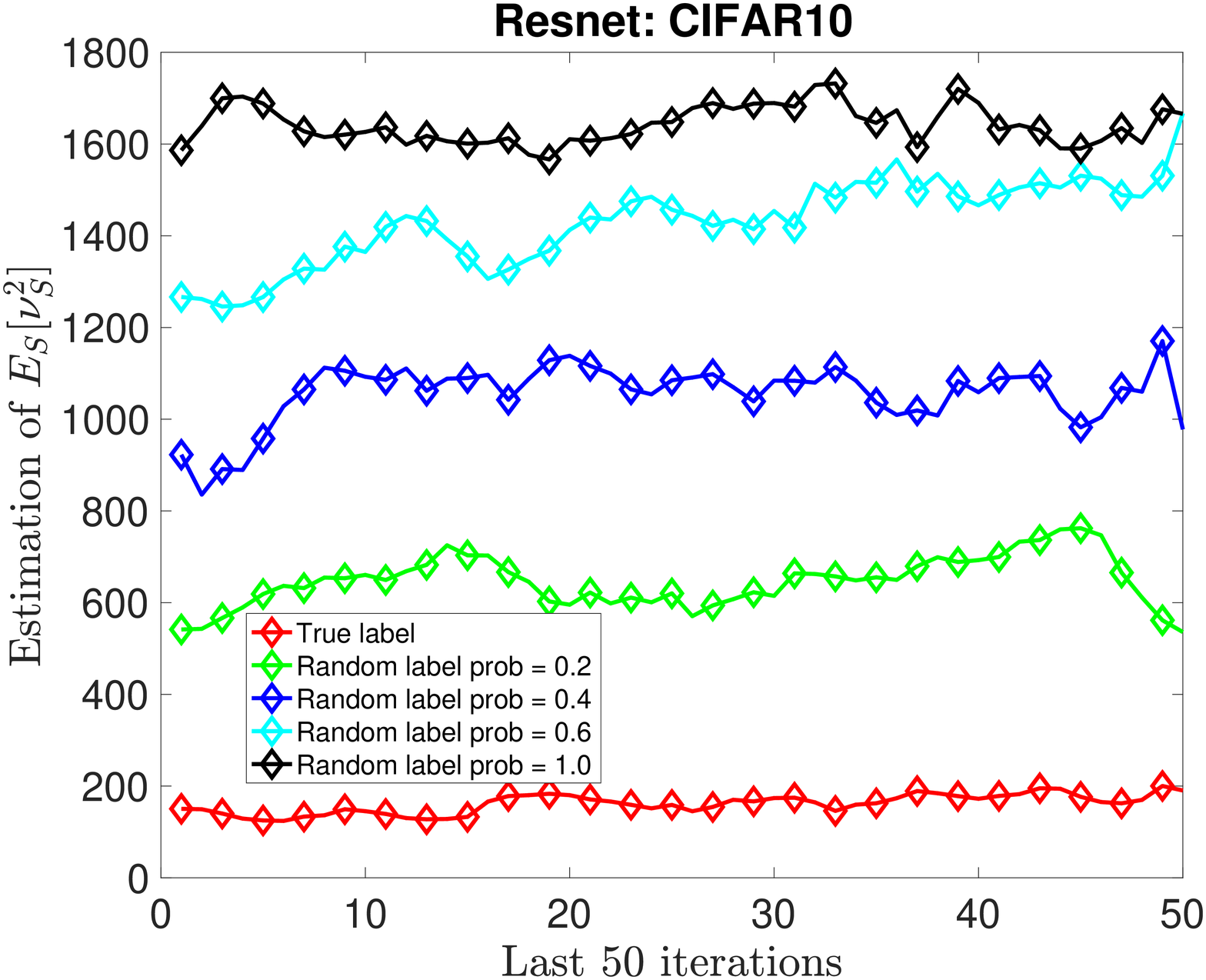}
}
	\caption{On-average variance of SGD v.s. random label probability.} 
	\label{fig: 1}
	\vspace{-0.5cm}
\end{figure*}

\section{Generalization Bound for SGD under Gradient Dominant Condition}
In this section, we consider nonconvex loss functions with the empirical risk function $f_S$ further satisfying the following gradient dominance condition.
\begin{definition}
	Denote $f^* := \inf_{\wb \in \Omega} f(\wb)$. Then, the function $f$ is said to be $\gamma$-gradient dominant for $\gamma>0$ if 
	\begin{align}
	f(\wb) - f^* \le \frac{1}{2\gamma} \norm{\nabla f(\wb)}^2, ~\forall \wb\in \Omega.
	\end{align}
\end{definition}
The gradient dominance condition (also referred to as Polyak-{\L}ojasiewicz condition \cite{Polyak_1963,Lojasiewicz_1963}) guarantees a linear convergence of the function value sequence generated by gradient-based first-order methods \cite{Karimi_2016}. It is a condition that is much weaker than the strong convexity, and many nonconvex machine learning problems satisfy this condition around the global minimizers \cite{Li_2016,Zhou_2016}. 

The gradient dominance condition helps to improve the bound on the on-average stochastic gradient norm $\Ebb_{\bm{\xi}, S} \left[\norm{\nabla \ell (\wb_{t, S}; \zb_{1})}\right]$ (see \Cref{lemma: grad_bound_GD} in the appendix), which is given by
\begin{align}
	\Ebb_{\bm{\xi}, S}& \left[\norm{\nabla \ell (\wb_{t, S}; \zb_{1})}\right] \nonumber\\
	&\le \sqrt{2L \Ebb_{S} [f_S^*] + \frac{1}{t} \bigg(2Lf(\wb_{0}) +  \Ebb_S[\nu_S^2]  \bigg)}. \label{eq:gradexp_gd}
	\end{align}
Compared to \cref{eq:gradexp} for general nonconvex functions, the above bound is further improved by a factor of $\frac{1}{t}$. This is because SGD converges sub-linearly to the optimum function value $f_S^*$ under the gradient dominance condition, and $\frac{1}{t}$ is essentially the convergence rate of SGD. In particular, for sufficiently large $t$, the on-average stochastic gradient norm is essentially bounded by $\sqrt{2L \Ebb_{S} [f_S^*]}$, which is much smaller then the bound in \cref{eq:gradexp}. With the bound in \cref{eq:gradexp_gd}, we obtain the following theorem.
\begin{thm} (Mean Square Bound) \label{thm: gene_gd}
Suppose $\ell$ is nonconvex, and $f_S$ is $\gamma$-gradient dominant ($\gamma<L$). Let Assumptions \ref{assum: loss} and \ref{assum: var} hold. Apply the SGD to solve the ERM with the data set $S$ and choose $\alpha_t = \frac{c}{(t+2)\log (t+2)}$ with $0<c<\min\{\frac{1}{L}, \frac{1}{2\gamma} \} $. Then, the following bound holds.
\begin{align}
&\Ebb_{\bm{\xi}, S} [|f_S (\wb_{T,S}) - f (\wb_{T,S})|^2]\le \nonumber \\
&\!\frac{2M^2}{n} \!+\! \frac{24M\sigma c}{n} \bigg(\!\!\!\sqrt{\!2L \Ebb_{S} [f_S^*]}\log T \!+\! \sqrt{\!2Lf(\!\wb_{0}\!) \!+\!  2\Ebb_S[\nu_S^2]}\!\bigg). \nonumber	
\end{align} 
\end{thm}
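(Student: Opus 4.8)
The plan is to bound the on-average stability $\Ebb_{S,\overline{S},\bm{\xi}}[\delta_{T,S,\overline{S}}]$ and then invoke \Cref{thm: stability} to convert it into a mean square generalization bound. Since $S$ and $\overline{S}$ share the initialization $\wb_{0}$, we have $\delta_{0,S,\overline{S}}=0$, so the recursion \cref{eq: recursive} may be unrolled from a zero initial value. Writing $\Delta_t:=\Ebb_{S,\overline{S},\bm{\xi}}[\delta_{t,S,\overline{S}}]$ and $G_t:=\Ebb_{S,\bm{\xi}}[\norm{\nabla\ell(\wb_{t,S};\zb_{1})}]$, the recursion reads $\Delta_{t+1}\le(1+\alpha_tL)\Delta_t+\tfrac{2\alpha_t}{n}G_t$, and dividing by the integrating factor $R_{t+1}:=\prod_{s=0}^{t}(1+\alpha_sL)$ (so that $R_{t+1}=(1+\alpha_tL)R_t$) and telescoping yields
\begin{align*}
\Delta_T\le\frac{2}{n}\sum_{t=0}^{T-1}\Big(\prod_{s=t+1}^{T-1}(1+\alpha_sL)\Big)\alpha_tG_t.
\end{align*}

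The gain over \Cref{coro: prob_gene} comes entirely from the sharper gradient bound \cref{eq:gradexp_gd}, which holds under $\gamma$-gradient dominance. Applying $\sqrt{a+b}\le\sqrt{a}+\sqrt{b}$ to \cref{eq:gradexp_gd}, I would split $G_t\le\sqrt{2L\Ebb_S[f_S^*]}+\tfrac{1}{\sqrt{t}}\sqrt{2Lf(\wb_{0})+\Ebb_S[\nu_S^2]}$ into a constant piece and a piece decaying like $1/\sqrt{t}$. The constant piece, when substituted into the telescoped sum, reproduces exactly the structure of the general nonconvex case and contributes the $\sqrt{2L\Ebb_S[f_S^*]}\log T$ term. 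The decaying piece carries the extra factor $1/\sqrt{t}$, which renders its associated series summable, so it contributes only the constant-order term $\sqrt{2Lf(\wb_{0})+2\Ebb_S[\nu_S^2]}$ with no $\log T$ dependence.

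To make the two contributions precise I would control the accumulation factor $\prod_{s=t+1}^{T-1}(1+\alpha_sL)$. Using $1+x\le e^{x}$ together with $\sum_{s}\alpha_s=c\sum_{s}\tfrac{1}{(s+2)\log(s+2)}\le c\log\log(T+1)+O(1)$ (by comparison with $\int\tfrac{dx}{x\log x}=\log\log x$), this factor is at most $(\log T/\log t)^{Lc}$ up to constants; since $c<1/L$ forces $Lc<1$, it is $O(\log T)$. For the constant piece the residual series $c\sum_t(t+2)^{-1}(\log(t+2))^{-1-Lc}$ converges because the exponent exceeds one, so the whole sum is $O(\log T)$; for the decaying piece the analogous computation with the extra $1/\sqrt{t}$ gives $c\sum_t(t+2)^{-3/2}(\log(t+2))^{-1-Lc}<\infty$, which is constant-order. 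Substituting the resulting estimate $\Delta_T\le\tfrac{2c}{n}\big(\sqrt{2L\Ebb_S[f_S^*]}\log T+\sqrt{2Lf(\wb_{0})+2\Ebb_S[\nu_S^2]}\big)$ into $\Ebb[|f_S(\wb_{T,S})-f(\wb_{T,S})|^2]\le\tfrac{2M^2}{n}+12M\sigma\Delta_T$ from \Cref{thm: stability} gives the claimed bound.

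The main obstacle I anticipate is the bookkeeping around the accumulation factor $\prod_{s}(1+\alpha_sL)$: one must verify that the doubly-logarithmic growth of $\sum_{s}\alpha_s$ keeps this factor sub-$\log T$, and recognize that the condition $Lc<1$ (together with the additional requirement $c<\tfrac{1}{2\gamma}$ needed for \cref{eq:gradexp_gd} to hold under $\gamma$-gradient dominance) is exactly what makes the two residual series converge. Tracking the absolute constants so that they collapse into the stated coefficient $24M\sigma c$ and the factor $2$ multiplying $\Ebb_S[\nu_S^2]$ is routine, but it is the step where the estimate must be handled carefully rather than loosely.
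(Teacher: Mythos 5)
Your proposal matches the paper's own proof essentially step for step: unroll the recursion of \Cref{lemma: GE_general} from $\delta_0=0$, insert the gradient-dominance bound \cref{eq:gradexp_gd} split via $\sqrt{a+b}\le\sqrt{a}+\sqrt{b}$, bound the accumulation factor by $(\log T/\log(t+2))^{cL}$ with $cL<1$ so the constant piece yields the $\log T$ term and the decaying piece a convergent series, and finish with \Cref{thm: stability} and the coefficient $24M\sigma c/n$. The only (shared) looseness is that you, like the paper, silently drop the residual $(\log T)^{cL}$ prefactor on the decaying piece, which is harmless since it is dominated by the leading $\log T$ term.
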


The above bound for the mean square generalization error under gradient dominance condition improves that for general nonconvex functions in \Cref{coro: prob_gene}, as the dominant term (i.e., $\log T$-dependent term) has coefficient $\sqrt{2L\Ebb_S[f_S^*]}$, which is much smaller than the term $\sqrt{2L f(\wb_{0}) +  \frac{1}{2}\Ebb_S[\nu_S^2]}$ in the bound of \Cref{coro: prob_gene}. As an intuitively understanding, the on-average variance of the SGD is further reduced by its fast convergence rate $\frac{1}{t}$ under the gradient dominance condition. This results in a more stable on-average iterate stability which in turn improves the mean square generalization error.  
We note that \cite{Charles_2017} also studied the generalization error of SGD for loss functions satisfying both the gradient dominance condition and an additional quadratic growth condition. They also assumed that the algorithm converges to a global minimizer point, which may not always hold for noisy algorithms like SGD. 


\Cref{thm: gene_gd} directly implies the following {\em probabilistic} guarantee for the generalization error of SGD.
\begin{thm}\label{coro: prob_gene1} (Bound with Probabilistic Guarantee)
Suppose $\ell$ is nonconvex, and $f_S$ is $\gamma$-gradient dominant ($\gamma<L$). Let Assumptions \ref{assum: loss} and \ref{assum: var} hold. Apply the SGD to solve the ERM with the data set $S$, and choose $\alpha_t = \frac{c}{(t+2)\log(t+2)}$ with $0<c<\min\{\frac{1}{L}, \frac{1}{2\gamma} \}$. Then, for any $\delta > 0$, with probability at least $1-\delta$, we have
\begin{align}
&|f_S (\wb_{T,S}) - f (\wb_{T,S})|  \le\nonumber \\
& \!\!\sqrt{ \!\tfrac{2M^2}{n\delta} \!+\! \tfrac{24M\sigma c}{n\delta} \bigg( \!\!\!\sqrt{\!2L \Ebb_{S} [f_S^*]}\log T \!+\! \sqrt{\!2Lf(\!\wb_{0}\!) \!+\!  2\Ebb_S[\nu_S^2]} \bigg) }. \nonumber
\end{align}
\end{thm}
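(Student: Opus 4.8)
The plan is to obtain this probabilistic guarantee as an immediate corollary of the mean square bound in \Cref{thm: gene_gd}, exactly in the spirit of how \Cref{coro: prob_gene} was derived from its own second moment estimate. The single tool I would use is Markov's inequality (the same device the paper invokes as Chebyshev's inequality), applied to the nonnegative random variable formed by the squared generalization error, whose randomness comes jointly from the draw of the data set $S$ and the sample path $\bm{\xi}$ of SGD.

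Concretely, I would set $Y := |f_S(\wb_{T,S}) - f(\wb_{T,S})|^2$ and let $B$ denote the upper bound on $\Ebb_{\bm{\xi}, S}[Y]$ guaranteed by \Cref{thm: gene_gd}, that is,
\begin{align}
B := \frac{2M^2}{n} + \frac{24M\sigma c}{n}\bigg(\sqrt{2L\Ebb_{S}[f_S^*]}\,\log T + \sqrt{2Lf(\wb_{0}) + 2\Ebb_S[\nu_S^2]}\bigg). \nonumber
\end{align}
For any fixed $\delta>0$, Markov's inequality applied to $Y$ at the threshold $B/\delta$ gives
\begin{align}
\mathbb{P}\Big(Y \ge \tfrac{B}{\delta}\Big) \le \frac{\Ebb_{\bm{\xi}, S}[Y]}{B/\delta} \le \frac{B}{B/\delta} = \delta, \nonumber
\end{align}
where the last inequality uses the bound $\Ebb_{\bm{\xi}, S}[Y] \le B$ from \Cref{thm: gene_gd}. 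Hence with probability at least $1-\delta$ we have $Y < B/\delta$.

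Finally, I would take square roots on the event $Y < B/\delta$, which turns the statement into $|f_S(\wb_{T,S}) - f(\wb_{T,S})| < \sqrt{B/\delta}$, and then distribute the factor $1/\delta$ inside $B$ so that the prefactors become $\frac{2M^2}{n\delta}$ and $\frac{24M\sigma c}{n\delta}$, matching the claimed expression verbatim. Since each step is an elementary manipulation of a bound already established in \Cref{thm: gene_gd}, I do not anticipate any genuine obstacle; the only point deserving attention is bookkeeping, namely ensuring that the expectation $\Ebb_{\bm{\xi}, S}$ controlling the second moment and the probability measure appearing in Markov's inequality refer to the same joint distribution over $S$ and $\bm{\xi}$, so that the conversion from expectation to high probability is legitimate.
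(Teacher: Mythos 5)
Your proposal is correct and is exactly the paper's route: the paper states that \Cref{thm: gene_gd} ``directly implies'' \Cref{coro: prob_gene1}, meaning the same Markov/Chebyshev step you perform on the squared generalization error under the joint law of $S$ and $\bm{\xi}$, just as \Cref{coro: prob_gene} was obtained from its second-moment bound. Your threshold choice $B/\delta$ and the square-root step reproduce the stated bound verbatim, so there is nothing to add.
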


\section{Regularized Nonconvex Optimization}\label{sec:regu}
In practical applications, regularization is usually applied to the risk minimization problem in order to either promote certain structures on the desired solution or to restrict the parameter space. In this section, we explore how regularization can improve the generation error, and hence help to avoid overfitting for SGD.

Here, for any weight $\lambda > 0$, we consider the regularized population risk minimization (R-PRM) and the regularized empirical risk minimization (R-ERM):
\begin{align}
	&\min_{\wb \in \Omega} ~ \Phi(\wb) := f(\wb) + \lambda h(\wb), \tag{R-PRM}\nonumber\\
	&\min_{\wb \in \Omega} ~ \Phi_S(\wb) := f_S(\wb) + \lambda h(\wb), \tag{R-ERM} \nonumber
\end{align}
where $h$ corresponds to the regularizer and $f, f_S$ are the population and empirical risks, respectively. In particular, we are interested in the following class of regularizers.
\begin{assum}\label{assum: h}
	The regularizer function $h$ is 1-strongly convex and nonnegative.
\end{assum}
Without loss of generality, we assume that the strongly convex parameter of $h$ is 1, and this can be adjusted by scaling the weight parameter $\lambda$.
Strongly convex regularizers are commonly used in machine learning applications, and typical examples include $\frac{\lambda}{2}\norm{\wb}^2$ for ridge regression, Tikhonov regularization $\frac{\lambda}{2}\norm{\Gamma\wb}^2$ and elastic net $\lambda_1\norm{\wb}_1 \!+\! \lambda_2\norm{\wb}^2$, etc. 
Here, we allow the regularizer $h$ to be non-differentiable (e.g., the elastic net), and introduce the following proximal mapping with parameter $\alpha >0$ to deal with the non-smoothness.
\begin{align}
	\prox{\alpha h}(\wb) := \arg\min_{\ub \in \Omega} h(\ub) + \frac{1}{2\alpha} \norm{\ub - \wb}^2.
\end{align}
The proximal mapping is the core of the proximal method for solving convex problems \cite{Parikh_2014,Beck_2009} and nonconvex ones \cite{Li_2017,Attouch_2013}. 
In particular, we apply the proximal SGD to solve the R-ERM. With the same notations as those defined in the previous section, the update rule of the proximal SGD can be written as, for $t = 0, \ldots, T-1$ 
\begin{align}
\wb_{t+1} = \prox{\alpha_t h}\big(\wb_{t} -  \alpha_t \nabla \ell (\wb_{t}; \zb_{\xi_t})\big). \tag{proximal-SGD}
\end{align}  
Similarly, we denote $\{\wb_{t, S}\}_t$ as the iterate sequence generated by the proximal SGD with the data set $S$. 

It is clear that the generalization error of the function value for the regularized risk minimization, i.e., $|\Phi(\wb_{T, S}) - \Phi_S(\wb_{T, S})|$, is the same as that for the un-regularized risk minimization. Hence, \Cref{thm: stability} is also applicable to the mean square generalization error of the regularized risk minimization.  
However, the development of the generalization error bound is different from the analysis in the previous section from two aspects. First, the analysis of the on-average iterate stability of the proximal SGD is technically more involved than that of SGD due to the possibly non-smooth regularizer. 
Secondly, the proximal mappings of strongly convex functions are strictly contractive (see item 2 of \Cref{prop: 1} in the appendix). Thus, the proximal step in the proximal SGD enhances the stability between the iterates $\wb_{t, S}$ and $\wb_{t, \overline{S}}$ that are generated by the algorithm using perturbed datasets, and this further improves the generalization error. 
The next result provides a quantitative statement.

 \begin{thm}\label{thm: improve_gene}
 Consider the regularized risk minimization. Suppose $\ell$ is nonconvex. Let Assumptions \ref{assum: loss}, \ref{assum: var} and \ref{assum: h} hold, and apply the proximal SGD to solve the R-ERM with the dataset $S$. Let $\lambda > L$ and $\alpha_t = \frac{c}{t+2}$ with $0<c<\frac{1}{L}$. Then, the following bound holds with probability at least $1- \delta$.
    \begin{align}
    	|\Phi(\wb_{T, S})  &- \Phi_S(\wb_{T, S})|  \nonumber\\
    	&\le  \sqrt{\frac{1}{n\delta} \Big( 2M^2  + \frac{24M\sigma}{(\lambda - L)} \sqrt{L\Phi(\wb_{0}) + \Ebb_{S}[\nu_S^2]} \Big)}. \nonumber
    \end{align}
 \end{thm}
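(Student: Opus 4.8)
The strategy is to mirror the proof of \Cref{coro: prob_gene} but exploit the contractivity of the proximal map to replace the amplification factor $(1+\alpha_t L)$ by a contraction factor. The central task is again to bound the on-average iterate stability $\Ebb_{S, \overline{S}, \bm{\xi}}[\delta_{T, S, \overline{S}}]$, which by \Cref{thm: stability} controls the mean square generalization error; the probabilistic bound then follows from Chebyshev's inequality exactly as before. So I would first establish a recursion for $\delta_{t, S, \overline{S}}$ that accounts for the proximal step, then telescope it, substitute an appropriate bound on the on-average stochastic gradient norm, plug into \Cref{thm: stability}, and finish with Chebyshev.

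\textbf{The recursion.} The key structural fact is that for a 1-strongly convex $h$, the map $\prox{\alpha h}$ is a contraction with modulus $\frac{1}{1+\alpha}$ (this is item 2 of \Cref{prop: 1} referenced in the appendix). Writing one proximal-SGD step applied with the same sample path $\bm\xi$ to $S$ and $\overline S$, I would use nonexpansiveness/contractivity of $\prox{\alpha_t h}$ together with the $L$-Lipschitz gradient to obtain, in the spirit of \cref{eq: recursive},
\begin{align}
\Ebb_{S, \overline{S}, \bm{\xi}} [\delta_{t+1, S, \overline{S}}]
&\le \frac{1 + \alpha_t L}{1 + \lambda \alpha_t} \Ebb_{S, \overline{S}, \bm{\xi}} [\delta_{t, S, \overline{S}}] \nonumber\\
&\quad + \frac{2\alpha_t}{n(1+\lambda\alpha_t)}\Ebb_{S, \bm{\xi}} \big[\norml{\nabla \ell (\wb_{t, S}; \zb_{1})}\big]. \nonumber
\end{align}
Because $\lambda > L$, the contraction factor $\frac{1+\alpha_t L}{1+\lambda\alpha_t}$ is strictly below $1$; with $\alpha_t = \frac{c}{t+2}$ one checks that this factor is at most $1 - \frac{(\lambda-L)c}{t+2+\lambda c}$, which is the mechanism that converts the logarithmically growing bound of the unregularized case into a bounded one and produces the $\frac{1}{\lambda - L}$ prefactor in the statement.

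\textbf{Telescoping and the gradient bound.} I would then telescope the recursion. The product of the contraction factors $\prod_{s}\frac{1+\alpha_s L}{1+\lambda\alpha_s}$ decays like a negative power of $t$, so the geometric-type sum of the inhomogeneous terms converges rather than accumulating a $\log T$ factor; this is precisely the improvement over \Cref{coro: prob_gene}. For the stochastic gradient norm I would reuse the optimization-path argument behind \cref{eq:gradexp}, adapted to the regularized objective $\Phi$ and the proximal dynamics, to get a bound of the form $\Ebb_{\bm\xi, S}[\norm{\nabla\ell(\wb_{t,S};\zb_1)}] \le \sqrt{L\Phi(\wb_0) + \Ebb_S[\nu_S^2]}$; substituting this and carrying out the summation yields $\Ebb_{S, \overline{S}, \bm{\xi}}[\delta_{T, S, \overline{S}}] \le \frac{2}{n(\lambda - L)}\sqrt{L\Phi(\wb_0) + \Ebb_S[\nu_S^2]}$ up to constants. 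Feeding this into \Cref{thm: stability} gives the second-moment bound $\frac{2M^2}{n} + \frac{24M\sigma}{n(\lambda-L)}\sqrt{L\Phi(\wb_0)+\Ebb_S[\nu_S^2]}$, and Chebyshev then delivers the claimed high-probability bound.

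\textbf{Main obstacle.} I expect the delicate point to be the derivation of the proximal gradient-norm bound analogous to \cref{eq:gradexp}: the sufficient-decrease / descent estimate must now be carried out for the composite objective $\Phi_S$ using the proximal gradient step rather than a plain gradient step, and one has to handle the nonsmooth $h$ through the proximal inequality while still isolating the stochastic-gradient term and controlling its variance via \Cref{assum: var}. A secondary technical nuisance is verifying the uniform upper bound on the contraction product and the convergence of the resulting series with the specific choice $\alpha_t = \frac{c}{t+2}$ and the constraint $c < \frac{1}{L}$, which is what makes the final constant clean.
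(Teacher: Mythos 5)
Your proposal is correct and follows essentially the same route as the paper's proof: the two-case contraction recursion via item 2 of \Cref{prop: 1} with factor $\frac{1+\alpha_t L}{1+\alpha_t\lambda}$, telescoping against the polynomially decaying product $\prod_k \frac{1+\alpha_k L}{1+\alpha_k\lambda} \lesssim \big(\frac{t+2}{T}\big)^{c(\lambda-L)}$ (which removes the $\log T$ and produces the $\frac{1}{\lambda-L}$ prefactor), a descent-based bound on $\Ebb_{\bm{\xi},S}\big[\norm{\nabla\ell(\wb_{t,S};\zb_1)}\big]$ adapted to the composite objective, substitution into \Cref{thm: stability}, and Chebyshev. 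The one inaccuracy is the gradient-norm bound you anticipate: because the variance term in the composite descent inequality enters with coefficient $\alpha_t$ (the cross term with the gradient mapping is controlled via item 1 of \Cref{prop: 1} and Cauchy--Schwarz) rather than the summable $\frac{L\alpha_t^2}{2}$ of the smooth case, the paper's \Cref{lemma: grad_bound2} yields $\sqrt{2L\Phi(\wb_0)+2\Ebb_S[\nu_S^2]\log t}$ rather than your time-uniform $\sqrt{L\Phi(\wb_0)+\Ebb_S[\nu_S^2]}$ --- but this $\sqrt{\log t}$ growth is harmless, exactly because it is order-wise dominated by the polynomial decay of the contraction product in the telescoped sum, so your argument goes through as planned.
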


\Cref{thm: improve_gene} provides {\em probabilistic} guarantee for the generalization error of the proximal SGD in terms of the on-average variance of the stochastic gradients. Comparison of \Cref{thm: improve_gene} with \Cref{coro: prob_gene} indicates that a strongly convex regularizer substantially improves the generalization error bound of SGD for nonconvex loss functions by removing the logarithm dependence on $T$. It is also interesting to compare \Cref{thm: improve_gene} with [Proposition 4 and Theorem 1, \cite{London_2017}], which characterize the generalization error of SGD for strongly convex functions with probabilistic guarantee. The two bounds have the same order in terms of $n$ and $T$, indicating that a strongly convex regularizer even improves the generalization error for a nonconvex function to be the same as that for a strongly convex function. 
In practice, the regularization weight $\lambda$ should be properly chosen to balance between the generalization error and the training loss, as otherwise the parameter space can be too restrictive to yield a good solution for the risk function. 

We further conduct experiments to explore the effect of regularization on the generalization error by adding the regularizer $\frac{\lambda}{2} \norm{\wb}^2$ to the objective functions. In particular, we apply the proximal SGD to solve the logistic regression (with dataset a9a) and train the neural network (with dataset MNIST) mentioned in the previous section. \Cref{fig: 2} shows the results where the left axis denotes the scale of the training error and the right axis denotes the scale of the generalization error. It can be seen that the corresponding generalization errors improve as the regularization weight gets large. This agrees with our theoretical finding on the impact of regularization. On the other hand, the training performances for both problems degrade as the regularization weight increases, which is reasonable because in such a case the optimization focuses too much on the regularizer and the obtained solution does not minimize the loss function well. Hence, there is a trade-off between the training and generalization performance in tuning the regularization parameter.
\begin{figure}[th]
	\centering 
	\subfigure{
		\includegraphics[width=2in]{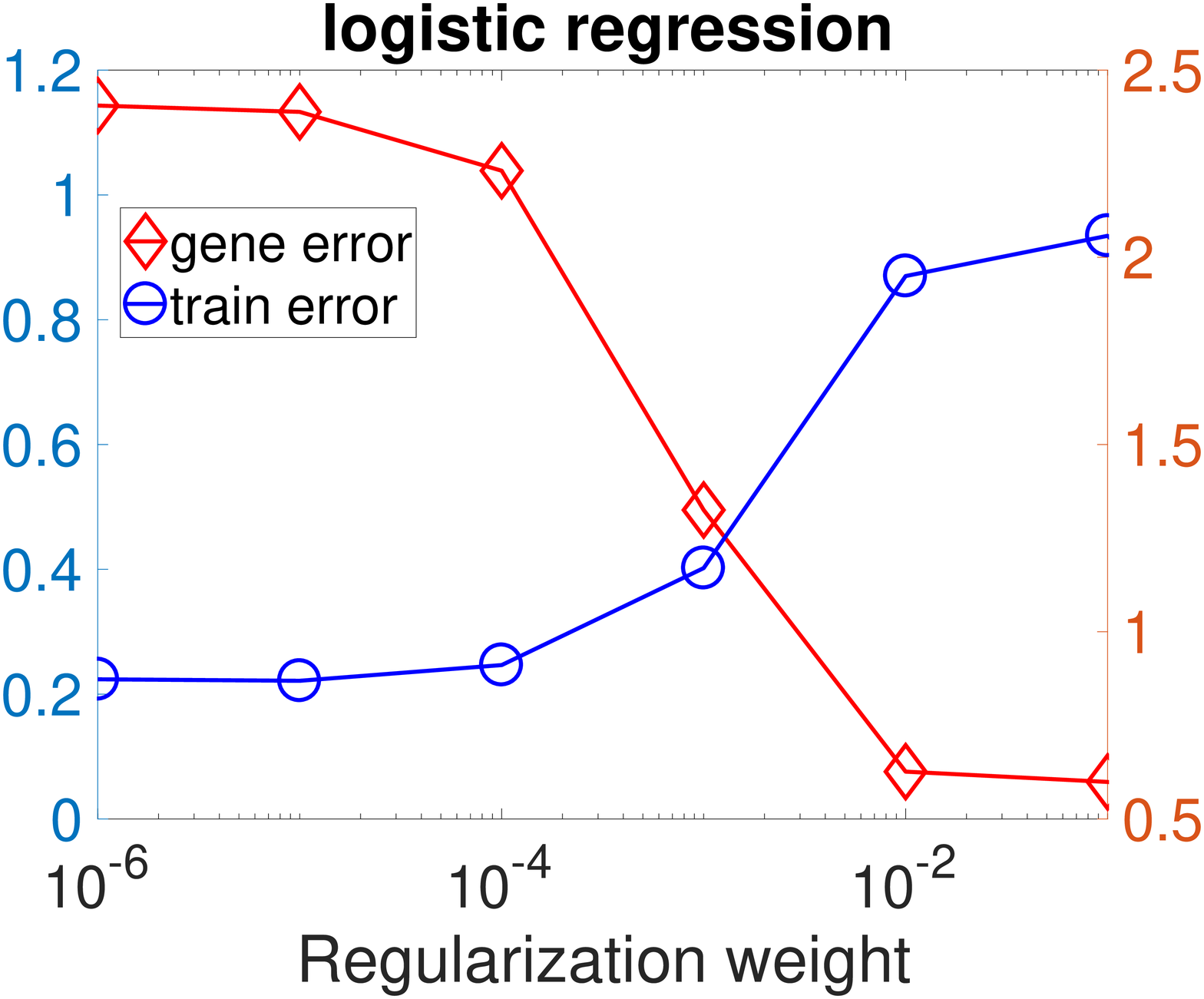}
	}
\vspace{0.5cm}
	\subfigure{
		\includegraphics[width=2in]{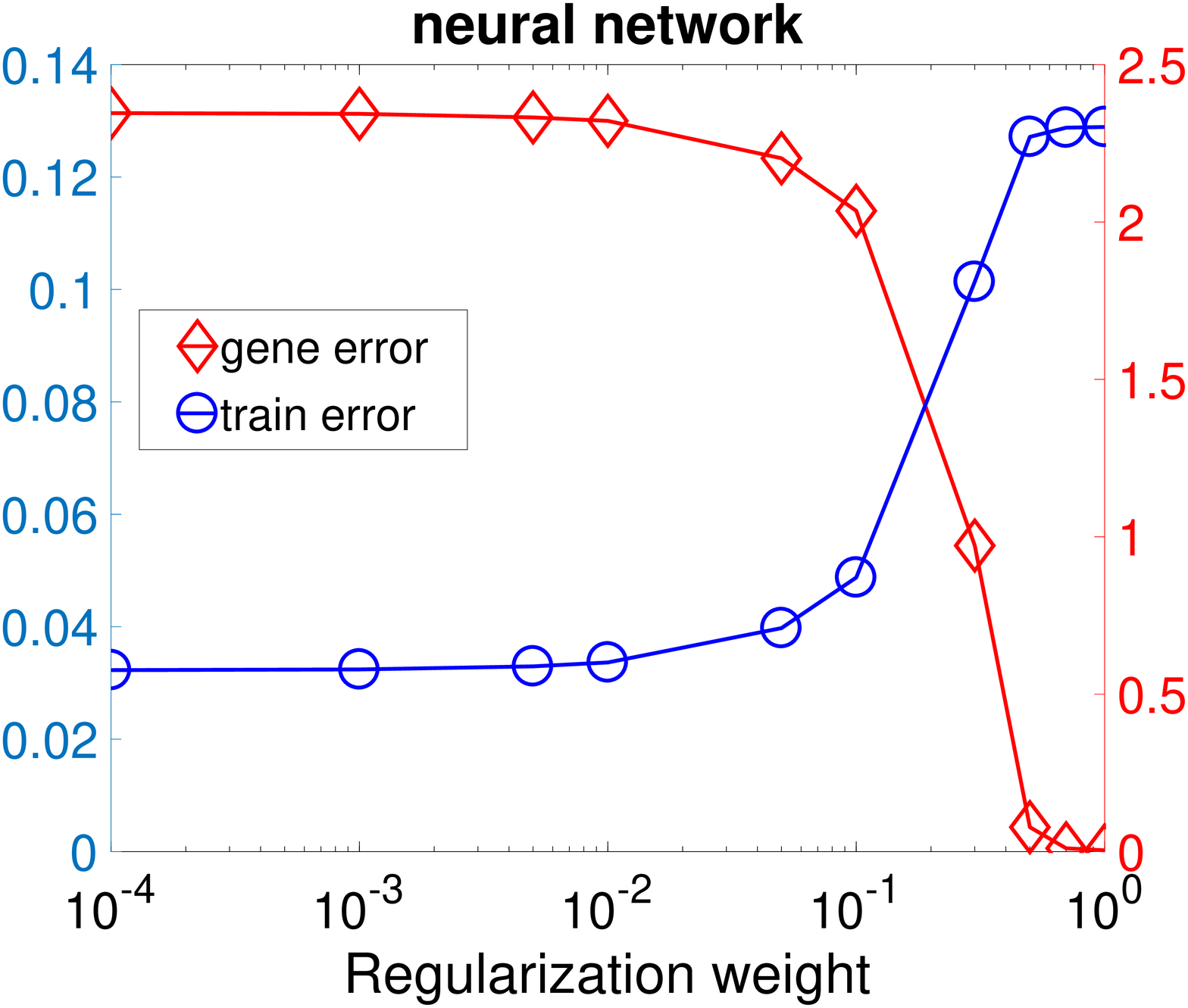}
	}
	\caption{Generalization error v.s. regularization parameter.} 
	\label{fig: 2}
	\vspace{-0.5cm}
\end{figure}

\subsection{Generalization Bound with High-Probability Guarantee}

The studies of the previous sections explore the {\em probabilistic} guarantee for the generalization errors of nonconvex loss functions and nonconvex loss functions with strongly convex regularizers. For example, apply SGD to solve a generic nonconvex loss function, then \Cref{coro: prob_gene} suggests that for any $\epsilon > 0$,
\begin{align*}
\mathbb{P} (|f(\wb_{T, S}) - f_S(\wb_{T, S})| > \epsilon ) < \Oc\Big( \frac{\log T}{n\epsilon^2}\Big),
\end{align*}
which decays sublinearly as $\frac{n}{\log T} \to \infty$. In this subsection, we study a stronger probabilistic guarantee for the generalization error, i.e., the probability for it to be less than $\epsilon$ decays {\em exponentially}. We refer to such a notion as high-probability guarantee. In particular, we explore for which cases of nonconvex loss functions we can establish such a stronger performance guarantee.  

Towards this end, we adopt the uniform stability framework proposed in \cite{Elisseeff_2005}. Note that \cite{Hardt_2016} also studied the uniform stability of SGD, but only characterized the generalization error in expectation, which is weaker than the {\em exponential} probabilistic concentrtion bound that we study here. 

Suppose we apply SGD with the same sample path $\bm{\xi}$ to solve the ERM with the datasets $S$ and $\overline{S}$, respectively, and
denote $\wb_{T, S, \bm{\xi}}$ and $\wb_{T, \overline{S}, \bm{\xi}}$ as the corresponding outputs. Also, suppose we apply the SGD with different sample paths $\bm{\xi}$ and $\overline{\bm{\xi}}$ to solve the same problem with the dataset $S$, respectively, and denote  $\wb_{T, S, \bm{\xi}}$ and $\wb_{T, S, \overline{\bm{\xi}}}$ as the corresponding outputs. Here, $\overline{\bm{\xi}}$ denotes the sample path that replaces one of the sampled indices, say $\xi_{t_0}$, with an i.i.d copy $\xi_{t_0}'$. The following result is a variant of [Theorem 15, \cite{Elisseeff_2005}].
\begin{lemma}\label{thm: eliss} 
	Let \Cref{assum: loss} hold. If SGD satisfies the following conditions \footnote{\Cref{thm: eliss} is slightly different from that in [Theorem 15, \cite{Elisseeff_2005}], in which $\overline{S}$ excludes a particular sample instead of replacing it. The proof follows the same idea and we omit it for simplicity.}
	\begin{align}
		&\sup_{S, \overline{S}, \zb} \Ebb_{\bm{\xi}} |\ell(\wb_{T, S, \bm{\xi}}; \zb) - \ell(\wb_{T, \overline{S}, \bm{\xi}}; \zb)| \le \beta, \nonumber\\
		&\sup_{\bm{\xi}, \overline{\bm{\xi}}, S, \zb} |\ell(\wb_{T, S, \bm{\xi}}; \zb) - \ell(\wb_{T, S, \overline{\bm{\xi}}}; \zb) | \le \rho. \nonumber
	\end{align}
	Then, the following bound holds with probability at least $1-\delta$.
	\begin{align}
		|\Phi(\wb_{T, S}) - &\Phi_S(\wb_{T, S})| \nonumber\\
		&\le  2\beta + \left(\frac{M+4n\beta}{\sqrt{2n}} + \sqrt{2T}\rho \right)\sqrt{\log\tfrac{2}{\delta}}. \nonumber
	\end{align}
\end{lemma}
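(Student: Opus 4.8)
The plan is to obtain exponential concentration via the bounded-differences (McDiarmid) inequality, treating the generalization error $G(S,\bm{\xi}) := f(\wb_{T,S,\bm{\xi}}) - f_S(\wb_{T,S,\bm{\xi}})$ as a function of the $n$ i.i.d.\ data samples $\zb_1,\dots,\zb_n$ together with the $T-1$ i.i.d.\ sampling indices $\xi_1,\dots,\xi_{T-1}$. Since the regularizer cancels, $\Phi-\Phi_S = f-f_S$, so it suffices to control $G$. The essential difficulty is that the dataset-stability hypothesis is only an \emph{expected} bound over $\bm{\xi}$ (a $\sup_{S,\overline{S},\zb}\Ebb_{\bm{\xi}}$ quantity), and hence cannot serve as a per-coordinate Lipschitz constant in a single application of McDiarmid to the joint vector $(S,\bm{\xi})$. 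I would therefore decouple the two sources of randomness by writing
$G(S,\bm{\xi}) - \Ebb_{S,\bm{\xi}}[G] = \big(G(S,\bm{\xi}) - \Ebb_{\bm{\xi}}[G(S,\cdot)]\big) + \big(R(S) - \Ebb_S[R(S)]\big)$,
where $R(S):=\Ebb_{\bm{\xi}}[G(S,\cdot)]$ has already averaged out $\bm{\xi}$.

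For the first summand I fix $S$ and apply McDiarmid over the indices $\bm{\xi}$. Replacing a single $\xi_j$ by an i.i.d.\ copy perturbs only the output $\wb_{T,S,\bm{\xi}}$, at which both $f$ and $f_S$ are evaluated, so the $\rho$-stability hypothesis (a genuine supremum bound) yields a per-coordinate difference of at most $2\rho$: one $\rho$ from $|f(\wb_{T,S,\bm{\xi}})-f(\wb_{T,S,\overline{\bm{\xi}}})|\le \Ebb_\zb|\ell(\wb_{T,S,\bm{\xi}};\zb)-\ell(\wb_{T,S,\overline{\bm{\xi}}};\zb)|\le\rho$, and one $\rho$ from the analogous bound on the empirical average. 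Summing $(2\rho)^2$ over the $T-1$ coordinates and invoking one-sided McDiarmid at confidence $\delta/2$ produces the $\sqrt{2T}\,\rho\,\sqrt{\log(2/\delta)}$ term.

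For the second summand I apply McDiarmid to $R(S)$ over the $n$ data points. Because the $\bm{\xi}$-expectation has already been taken, the merely-in-expectation $\beta$-stability is now exactly what is needed as a pointwise bound on $R$. Replacing $\zb_i$ by an i.i.d.\ copy perturbs $R(S)$ through two channels: the change of the algorithm output, controlled by $\beta$ in both the population term $\Ebb_{\bm{\xi}}|f(\wb_{T,S,\bm{\xi}})-f(\wb_{T,\overline{S},\bm{\xi}})|$ and the empirical average over the $n-1$ unchanged samples; and the direct change of the single $i$-th summand of $f_S$, controlled by the uniform bound $M$ (a contribution of order $M/n$). This gives a per-coordinate difference of order $\beta + M/n$, and summing over the $n$ coordinates (again one-sided McDiarmid at level $\delta/2$) yields the $\tfrac{M+4n\beta}{\sqrt{2n}}\sqrt{\log(2/\delta)}$ term. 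A union bound over the two concentration events supplies the overall $1-\delta$ guarantee and the shared $\sqrt{\log(2/\delta)}$ factor.

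Finally, I would control the mean $\Ebb_{S,\bm{\xi}}[G]$ by the classical expected-stability generalization bound: the i.i.d.\ renaming argument shows $\Ebb_{S,\bm{\xi}}[f-f_S]=\Ebb_{\bm{\xi}}[\ell(\wb_{T,\overline{S},\bm{\xi}};\zb_i)-\ell(\wb_{T,S,\bm{\xi}};\zb_i)]$, whose magnitude is at most the $\beta$-stability, giving the leading $2\beta$ after accounting for the replacement convention (the same argument applied to $f_S-f$ handles the opposite direction). Adding this mean to the two fluctuation bounds yields the stated inequality. I expect the main obstacle to be precisely the decoupling step: justifying that averaging over $\bm{\xi}$ \emph{before} invoking McDiarmid legitimately converts the in-expectation $\beta$-stability into a valid bounded-difference constant for $R(S)$, while the residual $\bm{\xi}$-fluctuation is absorbed by the separate $\rho$-based concentration. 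The remaining bookkeeping of constants (the $M$ versus $2M$ and $4\beta$ versus $2\beta$ factors) then follows the replacement-based variant flagged in the footnote.
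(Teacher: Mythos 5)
Your proposal is correct and takes essentially the same route as the paper: the paper omits this proof and defers to [Theorem 15, \cite{Elisseeff_2005}], whose argument is exactly your decoupling --- McDiarmid over the sample path $\bm{\xi}$ conditionally on $S$ using the uniform constant $\rho$, McDiarmid over $S$ applied to the $\bm{\xi}$-averaged error where the in-expectation constant $\beta$ (plus the $M/n$ contribution of the replaced sample) is a legitimate bounded-difference constant, and the standard i.i.d.\ renaming argument bounding the mean by $2\beta$. Your per-coordinate constant $2\beta + M/n$ is in fact slightly sharper than the stated $4n\beta + M$ coefficient, so the bookkeeping closes with room to spare.
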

 
 Note that \Cref{thm: eliss} implies that 
 \begin{align*}
 \mathbb{P}(|\Phi(\wb_{T, S}) - \Phi_S(\wb_{T, S})|>\epsilon) \le \Oc\bigg(\exp\Big(\tfrac{-\epsilon^2}{\sqrt{n}\beta +\sqrt{T}\rho}\Big)\bigg).
 \end{align*}
 Hence, if $\beta =o(n^{-\frac{1}{2}})$ and $\rho =o(T^{-\frac{1}{2}})$, then we have exponential decay in probability as $n\rightarrow \infty$ and $T \rightarrow \infty$. 
It turns out that our analysis of the uniform stability of SGD for general nonconvex functions yields that 
$\beta =\Oc(n^{-1}),  \rho =\Oc(\log T)$, which does not lead to the desired high-probability guarantee for the generalization error. On the other hand, the analysis of the uniform stability of the proximal SGD for nonconvex loss functions with strongly convex regularizers yields that
$\beta = \Oc(n^{-1}), \rho =\Oc(T^{-c(\lambda - L)}),$
which leads to the high-probability guarantee if we choose $\lambda > L$ and $c> \frac{1}{2(\lambda - L)}$. This further demonstrates that a strongly convex regularizer can significantly improve the quality of the probabilistic bound for the generalization error. The following result is a formal statement of the above discussion.
\begin{thm}\label{thm: high_1}
	Consider the regularized risk minimization with the nonconvex loss function $\ell$. Let Assumptions \ref{assum: loss} and \ref{assum: h} hold, and apply the proximal SGD to solve the R-ERM with the data set $S$. Choose $\lambda > L$ and $\alpha_t = \frac{c}{t+2}$ with $\frac{1}{2(\lambda - L)}<c<\frac{1}{\lambda - L}$. Then, the following bound holds with probability at least $1- \delta$
	\begin{align}
		|\Phi(\wb_{T, S}) &- \Phi_S(\wb_{T, S})| \nonumber\\
		&\le \Big(\frac{M}{\sqrt{n}} + \frac{4\sigma^2}{\sqrt{n}(\lambda - L)} + \frac{4\sigma^2 c}{T^{c(\lambda - L) - \frac{1}{2}}} \Big)\sqrt{\log\frac{2}{\delta}}. \nonumber
	\end{align}
\end{thm}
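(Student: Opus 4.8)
The plan is to invoke \Cref{thm: eliss} and reduce everything to producing the two uniform stability constants: once we show $\beta = \Oc(n^{-1})$ and $\rho = \Oc(T^{-c(\lambda-L)})$, substituting into \Cref{thm: eliss} and collecting constants yields the claim, with the two constraints on $c$ controlling the two tail terms. The engine of both bounds is a one-step contraction of the proximal-SGD map. Writing one iteration as a gradient step $G_t(\wb) = \wb - \alpha_t\nabla\ell(\wb;\zb_{\xi_t})$ followed by $\prox{\alpha_t\lambda h}$, the gradient step is $(1+\alpha_t L)$-Lipschitz by the $L$-smoothness in \Cref{assum: loss}, while $\prox{\alpha_t\lambda h}$ is $\tfrac{1}{1+\alpha_t\lambda}$-Lipschitz because $\lambda h$ is $\lambda$-strongly convex (item 2 of \Cref{prop: 1}). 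Hence the composition contracts distances by the factor $\kappa_t := \frac{1+\alpha_t L}{1+\alpha_t\lambda}$, which is strictly less than $1$ precisely because $\lambda > L$; note that only nonexpansiveness/contractivity of the proximal map is used, so the possible non-smoothness of $h$ never enters the metric argument.

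First I would establish the product estimate that drives both bounds. Using $\log\kappa_j \le \kappa_j - 1 = -\frac{\alpha_j(\lambda-L)}{1+\alpha_j\lambda}$ together with $\alpha_j = \frac{c}{j+2}$, an integral comparison of $\sum_j \frac{\alpha_j}{1+\alpha_j\lambda}$ gives
\begin{align}
\prod_{j=s+1}^{T-1}\kappa_j \le \Big(\tfrac{s+2}{T}\Big)^{c(\lambda-L)} \nonumber
\end{align}
up to absolute constants. For $\beta$ I would run proximal SGD with the same path $\bm{\xi}$ on $S$ and $\overline{S}$ and track $\delta_t = \norm{\wb_{t,S}-\wb_{t,\overline{S}}}$: at each step the iterate map contracts by $\kappa_t$, and only when the replaced sample is drawn (probability $1/n$) is an extra $2\alpha_t\sigma$ injected via the triangle inequality and the gradient bound, giving $\Ebb_{\bm\xi}[\delta_{t+1}] \le \kappa_t\Ebb_{\bm\xi}[\delta_t] + \frac{2\alpha_t\sigma}{n}$. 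Unrolling with the product estimate leaves $\frac{2\sigma}{n}\sum_s \alpha_s(\frac{s+2}{T})^{c(\lambda-L)}$, and since $\sum_s (s+2)^{c(\lambda-L)-1}\asymp \frac{T^{c(\lambda-L)}}{c(\lambda-L)}$ the telescoped sum collapses to $\Oc\big(\frac{\sigma}{n(\lambda-L)}\big)$; multiplying by the $\sigma$-Lipschitz constant of $\ell$ yields $\beta = \Oc\big(\frac{\sigma^2}{n(\lambda-L)}\big)$.

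For $\rho$ I would run proximal SGD on the same dataset $S$ with paths $\bm\xi,\overline{\bm\xi}$ that differ only at index $t_0$: the two iterates coincide up to $t_0$, pick up a single-step gap of at most $2\alpha_{t_0}\sigma$ there, and then contract by $\prod_{j=t_0+1}^{T-1}\kappa_j$, so $\norm{\wb_{T,S,\bm\xi}-\wb_{T,S,\overline{\bm\xi}}} \le 2\alpha_{t_0}\sigma(\frac{t_0+2}{T})^{c(\lambda-L)} \asymp \frac{\sigma c (t_0+2)^{c(\lambda-L)-1}}{T^{c(\lambda-L)}}$. Here the upper constraint $c < \frac{1}{\lambda-L}$ enters: it forces the exponent $c(\lambda-L)-1 < 0$, so the supremum over the perturbed index sits at $t_0 = 0$ and gives $\rho = \Oc\big(\frac{\sigma^2 c}{T^{c(\lambda-L)}}\big)$. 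Substituting both constants into \Cref{thm: eliss}, the term $\frac{4n\beta}{\sqrt{2n}}$ becomes the $\frac{\sigma^2}{\sqrt n(\lambda-L)}$ contribution, the $\frac{M}{\sqrt{2n}}$ term is already present, and $\sqrt{2T}\rho$ becomes $\frac{\sigma^2 c}{T^{c(\lambda-L)-1/2}}$, where the lower constraint $c > \frac{1}{2(\lambda-L)}$ guarantees $c(\lambda-L)-\tfrac12>0$ so this last term decays in $T$.

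The step I expect to be most delicate is the product estimate and the two summations built on it: getting clean, constant-level control of $\prod_{j=s+1}^{T-1}\kappa_j$ out of the logarithmic-decay step sizes, and then verifying that the $\beta$-sum genuinely telescopes to an $n^{-1}$ quantity rather than the $\frac{\log T}{n}$ one would get from a naive $\kappa_j \le 1$ bound, while simultaneously the $\rho$-supremum is attained at $t_0=0$ exactly in the regime $\frac{1}{2(\lambda-L)} < c < \frac{1}{\lambda-L}$. In other words, the whole theorem hinges on the contraction $\kappa_t<1$ being strong enough to kill the $\log T$ accumulation, and on the two $c$-constraints being the precise conditions under which the resulting exponents line up with the stated bound.
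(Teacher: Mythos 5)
Your proposal is correct and follows essentially the same route as the paper's own proof: both reduce to \Cref{thm: eliss} via the uniform stability constants $\beta = \Oc\big(\tfrac{\sigma^2}{n(\lambda-L)}\big)$ (the on-average recursion with the worst-case gradient bound $\sigma$) and $\rho = \Oc\big(\tfrac{\sigma^2 c}{T^{c(\lambda-L)}}\big)$ (single-index path perturbation, one-step gap $2\alpha_{t_0}\sigma$, then the proximal contraction $\kappa_t = \tfrac{1+\alpha_t L}{1+\alpha_t\lambda}$ telescoped into $(\tfrac{t_0+2}{T})^{c(\lambda-L)}$), with the two constraints on $c$ playing exactly the roles you identify. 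Your derivation of the product estimate via $\log\kappa_j \le \kappa_j - 1$ is a slightly cleaner justification than the paper's $\kappa_t \lesssim \exp(-\alpha_t(\lambda-L))$, but it is a cosmetic refinement rather than a different argument.
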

\Cref{thm: high_1} implies that 
\begin{align*}
	\mathbb{P}(|\Phi(\wb_{T, S}) &- \Phi_S(\wb_{T, S})|>\epsilon) \nonumber\\
	&\le \Oc\bigg(\exp\Big(\tfrac{-\epsilon^2}{n^{-\frac{1}{2}} + T^{\frac{1}{2}-c(\lambda - L)}}\Big)\bigg).
\end{align*}
 Hence, if we choose $c = \frac{1}{\lambda - L}$ and run the proximal SGD for $T = \Oc(n)$ iterations (i.e., constant passes over the data), then the probability of the event decays exponentially as $\Oc(\exp(-\sqrt{n}\epsilon^2))$.
 
The proof of \Cref{thm: high_1} characterizes the uniform iterate stability of the proximal SGD with regard to the perturbations of both the dataset and the sample path. Unlike the on-average stability in \Cref{coro: prob_gene} where the stochastic gradient norm is bounded by the on-average variance of the stochastic gradients, the uniform stability captures the worst case among all datasets, and hence uses the uniform upper bound $\sigma$ for the stochastic gradient norm. 

We note that [Theorem 3, \cite{London_2017}] also established a probabilistic bound under the PAC Bayesian framework. However, their result yields exponential concentration guarantee only for strongly convex loss functions. As a comparison, \Cref{thm: high_1} relaxes the requirement of strong convexity for loss functions to {\em nonconvex} loss functions with strongly convex regularizers, and hence serves as a complementary result to theirs. Also, \cite{Mou_2017} establishes the high-probability bound for the generalization error of SGD with regularization. However, their result holds only for the particular regularizer $\frac{1}{2}\norm{\wb}^2$, and high-probability bound holds only with regard to the random draw of the data. As a comparison, our result holds for all strongly convex regularizers, and the high-probability bound hold with regard to both the draw of data and randomness of algorithm.

\section{Conclusion}
In this paper, we develop the generalization error bound of SGD with probabilistic guarantee for nonconvex optimization. 
We obtain the improved bounds based on the variance of the stochastic gradients 
by exploiting the optimization path of SGD. Our generalization bound is consistent with the effect of random labels on the generalization error that observed in practical experiments. We further show that strongly convex regularizers can significantly improve the probabilistic concentration bounds for the generalization error from the sub-linear rate to the exponential rate. Our study demonstrates that the geometric structure of the problem can be an important factor in improving the generalization performance of algorithms. Thus, it is of interest to explore the generalization error under various geometric conditions of the objective function in the future work.

\bibliographystyle{aaai}
\bibliography{ref}

\onecolumn
\newpage
\appendix
\section{Proof of Main Results}\label{sec:append_A}
\subsection{Proof of \Cref{thm: stability}}
The proof is based on [Lemma 11, \cite{Elisseeff_2005}] and \Cref{assum: loss}. Denote $S^i$ as the data set that replaces the $i$-th sample of $S$ with an i.i.d.\ copy generated from the distribution $\mathcal{D}$. Following from Lemma 11 of \cite{Elisseeff_2005}, we obtain
\begin{align}
	\Ebb_{S, \bm{\xi}} |f_S (\wb_{T,S}) - f (\wb_{T,S})|^2 
	&\le \frac{2M^2}{n}  + \frac{12M}{n} \sum_{i=1}^{n} \Ebb_{\bm{\xi}, S, S^i} \left[|\ell(\wb_{T, S}; \zb_i) - \ell(\wb_{T, S^i}; \zb_i)|\right] \nonumber \\
	&\le \frac{2M^2}{n}  + \frac{12M\sigma}{n} \sum_{i=1}^{n} \Ebb_{\bm{\xi}, S, S^i} \norm{\wb_{T, S} - \wb_{T, S^i}} \nonumber \\
	&= \frac{2M^2}{n}  + 12M\sigma \Ebb_{\bm{\xi}, S, \overline{S}} \norm{\wb_{T, S} - \wb_{T, \overline{S}}}, \nonumber
\end{align}
where the second inequality uses the Lipschitz property of the loss function in \Cref{assum: loss}, and the last equality is due to the fact that the perturbed samples in $S^i$ and $\overline{S}$ are generated i.i.d from the underlying distribution.

\subsection{Proof of \Cref{coro: prob_gene}}
The proof is based on the following two important lemmas, which we prove first.

\begin{lemma}\label{lemma: GE_general}
	Let \Cref{assum: loss} hold. Apply SGD with the same sample path $\bm{\xi}$ to solve the ERM with data sets $S$ and $\overline{S}$, respectively. Choose $\alpha_t = \frac{c}{(t+2)\log (t+2)}$ with $0<c<\frac{1}{L}$, then the following bound holds.
	\begin{align}
	\Ebb_{S, \overline{S}, \bm{\xi}} [\delta_{t+1, S, \overline{S}}] 
	&\le (1 + \alpha_t L) \Ebb_{S, \overline{S}, \bm{\xi}} [\delta_{t, S, \overline{S}}]  + \frac{2\alpha_t}{n}\Ebb_{S, \bm{\xi}} \left[\norml{\nabla \ell (\wb_{t, S}; \zb_{1})}\right]. \nonumber
	\end{align} 
\end{lemma}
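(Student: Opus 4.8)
The plan is to establish the recursion by conditioning on the step-$t$ iterates, analyzing the one-step update of the two coupled SGD paths run with the common sample path, and then taking expectations. Write the updates as $\wb_{t+1,S} = \wb_{t,S} - \alpha_t \nabla\ell(\wb_{t,S}; \zb_{\xi_t}^S)$ and $\wb_{t+1,\overline{S}} = \wb_{t,\overline{S}} - \alpha_t \nabla\ell(\wb_{t,\overline{S}}; \zb_{\xi_t}^{\overline{S}})$, where $\zb_{\xi_t}^S$ and $\zb_{\xi_t}^{\overline{S}}$ denote the $\xi_t$-th samples of $S$ and $\overline{S}$. By the i.i.d.\ structure of the data I may assume without loss of generality that $S$ and $\overline{S}$ differ only at position $1$, so $\zb_j^S = \zb_j^{\overline{S}}$ for every $j \neq 1$.

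First I would split on whether $\xi_t$ selects the perturbed coordinate, which happens with probability $\tfrac{1}{n}$ since $\xi_t$ is uniform on $\{1,\dots,n\}$ and independent of the data. If $\xi_t \neq 1$, both updates apply the same gradient map, so by the triangle inequality together with the $L$-Lipschitz gradient in \Cref{assum: loss},
$$\delta_{t+1,S,\overline{S}} = \norml{(\wb_{t,S} - \wb_{t,\overline{S}}) - \alpha_t(\nabla\ell(\wb_{t,S};\zb_{\xi_t}) - \nabla\ell(\wb_{t,\overline{S}};\zb_{\xi_t}))} \le (1+\alpha_t L)\,\delta_{t,S,\overline{S}},$$
i.e.\ the gradient step is $(1+\alpha_t L)$-expansive. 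If $\xi_t = 1$, the two gradients are evaluated at distinct samples, so I use only the triangle inequality to obtain $\delta_{t+1,S,\overline{S}} \le \delta_{t,S,\overline{S}} + \alpha_t\norml{\nabla\ell(\wb_{t,S};\zb_1^S)} + \alpha_t\norml{\nabla\ell(\wb_{t,\overline{S}};\zb_1^{\overline{S}})}$.

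Next I would combine the two cases through the conditional expectation over $\xi_t$ given the step-$t$ iterates,
$$\Ebb_{\xi_t}[\delta_{t+1,S,\overline{S}}] \le \Big(1-\tfrac{1}{n}\Big)(1+\alpha_t L)\delta_{t,S,\overline{S}} + \tfrac{1}{n}\Big(\delta_{t,S,\overline{S}} + \alpha_t\norml{\nabla\ell(\wb_{t,S};\zb_1^S)} + \alpha_t\norml{\nabla\ell(\wb_{t,\overline{S}};\zb_1^{\overline{S}})}\Big).$$
Since $(1-\tfrac{1}{n})(1+\alpha_t L) + \tfrac{1}{n} = 1 + \alpha_t L - \tfrac{\alpha_t L}{n} \le 1 + \alpha_t L$, the coefficient of $\delta_{t,S,\overline{S}}$ is at most $1 + \alpha_t L$, and taking total expectation over $S, \overline{S}, \bm{\xi}$ then yields the claimed recursion, provided the two residual gradient-norm terms can each be identified with $\Ebb_{S,\bm{\xi}}\norml{\nabla\ell(\wb_{t,S};\zb_1)}$.

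This last identification is the step I expect to require the most care: the iterate $\wb_{t,S}$ depends on the whole dataset $S$ (including $\zb_1$) and on $\bm{\xi}$, so it is correlated with $\zb_1$, yet the expression is well-defined and is exactly the term stated in the lemma. For the second term, $\wb_{t,\overline{S}}$ is produced by SGD on $\overline{S}$ while $\zb_1^{\overline{S}}$ is the position-$1$ sample of $\overline{S}$; since $\overline{S}$ is itself a set of $n$ i.i.d.\ draws from $\mathcal{D}$ with the same law as $S$, relabeling $\overline{S}\to S$ gives $\Ebb_{S,\overline{S},\bm{\xi}}\norml{\nabla\ell(\wb_{t,\overline{S}};\zb_1^{\overline{S}})} = \Ebb_{S,\bm{\xi}}\norml{\nabla\ell(\wb_{t,S};\zb_1)}$, and the first term reduces to the same quantity (with $\overline{S}$ dropping out of the expectation). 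Summing the two equal terms produces the factor $\tfrac{2}{n}$, completing the recursion. I note that the specific step-size schedule is not actually needed for this lemma — only $\alpha_t>0$ — because the $(1+\alpha_t L)$-expansiveness holds for any positive step size; the stated form of $\alpha_t$ is carried along only for consistency with the telescoping argument used in the main theorem.
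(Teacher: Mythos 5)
Your proposal is correct and follows essentially the same route as the paper's proof: the same split on whether $\xi_t$ hits the perturbed index, the same $(1+\alpha_t L)$-expansiveness bound via the $L$-Lipschitz gradient, the same triangle-inequality bound in the collision case, and the same identification $\Ebb_{S,\overline{S},\bm{\xi}}\norml{\nabla \ell(\wb_{t,\overline{S}};\zb_1')} = \Ebb_{S,\bm{\xi}}\norml{\nabla \ell(\wb_{t,S};\zb_1)}$ from the fact that $(\overline{S},\bm{\xi})$ and $(S,\bm{\xi})$ are identically distributed. Your side remark is also accurate: the one-step recursion needs only $\alpha_t>0$, and the stated schedule is invoked only later in the telescoping step of \Cref{coro: prob_gene}.
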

\begin{proof}[Proof of \Cref{lemma: GE_general}]
	Consider the two fixed data sets $S$ and $\overline{S}$ that differ at, say, the first data sample. At the $t$-th iteration, we consider two cases of the sampled index ${\xi_t}$. In the first case, $1 \notin {\xi_t}$ (w.p. $\frac{n-1}{n}$), i.e., the sampled data from $S$ and $\overline{S}$ are the same, and we obtain that  
	\begin{align}
	\delta_{t+1, S, \overline{S}} 
	&= \norml{\wb_{t, S} - \alpha_t \nabla \ell (\wb_{t, S}; \zb_{\xi_t}) - \wb_{t,\overline{S}} + \alpha_t \nabla \ell (\wb_{t, \overline{S}}; \zb_{\xi_t})} \nonumber\\
	&\le \delta_{t, S, \overline{S}} + \alpha_t \norml{\nabla \ell (\wb_{t, S}; \zb_{\xi_t}) - \nabla \ell (\wb_{t, \overline{S}}; \zb_{\xi_t}) } \nonumber\\
	&\le (1 + \alpha_t L)\delta_{t, S, \overline{S}},  \label{eq: 3}
	\end{align}
	where the last inequality uses the $L$-Lipschitz property of $\nabla \ell$. In the other case, $1 \in {\xi_t}$ (w.p. $\frac{1}{n}$), we obtain that
	\begin{align}
	\delta_{t+1, S, \overline{S}} 
	&= \norml{\wb_{t, S} - \alpha_t  \nabla \ell (\wb_{t, S}; \zb_{1}) - \wb_{t,\overline{S}} + \alpha_t \nabla \ell (\wb_{t, \overline{S}}; \zb_{1}')} \nonumber\\
	&\le \delta_{t, S, \overline{S}} + \alpha_t \norml{\nabla \ell (\wb_{t, S}; \zb_{1}) - \nabla \ell (\wb_{t, \overline{S}}; \zb_{1}')}\nonumber\\
	&\le \delta_{t, S, \overline{S}} + \alpha_t \left(\norml{\nabla \ell (\wb_{t, S}; \zb_{1})} + \norm{\nabla \ell (\wb_{t, \overline{S}}; \zb_{1}')} \right). \label{eqq: 4}
	\end{align}
	Combining the above two cases and taking expectation with respect to all randomness, we obtain that
	\begin{align}
	\Ebb_{S, \overline{S}, \bm{\xi}} [\delta_{t+1, S, \overline{S}}] 
	&\le \left[\frac{n-1}{n} (1 + \alpha_t L) + \frac{1}{n}   \right]  \Ebb_{S, \overline{S}, \bm{\xi}} [\delta_{t, S, \overline{S}}] + \frac{1}{n} \alpha_t\Ebb_{S, \overline{S}, \bm{\xi}}\left(\norml{\nabla \ell (\wb_{t, S}; \zb_{1})} + \norm{\nabla \ell (\wb_{t, \overline{S}}; \zb_{1}')} \right) \nonumber\\
	&\overset{(i)}{\le} (1 + \alpha_t L) \Ebb_{S, \overline{S}, \bm{\xi}} [\delta_{t, S, \overline{S}}]  + \frac{2\alpha_t}{n}\Ebb_{S, \bm{\xi}} \left[\norml{\nabla \ell (\wb_{t, S}; \zb_{1})}\right], \label{eq: 38} 
	\end{align}
	where (i) uses the fact that $\zb_{1}'$ is an i.i.d.\ copy of $\zb_{1}$.
\end{proof}

\begin{lemma}\label{lemma: grad_bound}
	Let Assumptions \ref{assum: loss} and \ref{assum: var} hold. Apply SGD to solve the ERM with data set $S$ and choosing $\alpha_t \le \frac{c}{t+2}$ for some $0<c<\frac{1}{L}$. Then, the following bound holds.
	\begin{align}
	&\Ebb_{\bm{\xi}, S} \left[\norm{\nabla \ell (\wb_{t, S}; \zb_{1})}\right] \le  \sqrt{2Lf(\wb_{0}) + \frac{1}{2}\Ebb_{S}[\nu_S^2] }. \nonumber
	\end{align}
\end{lemma}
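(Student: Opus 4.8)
The plan is to pass from the first absolute moment to the second moment via Jensen's inequality, reducing the claim to
\[
\Ebb_{\bm{\xi}, S}\big[\norm{\nabla \ell(\wb_{t,S};\zb_1)}^2\big] \le 2Lf(\wb_{0}) + \tfrac12\Ebb_S[\nu_S^2].
\]
The structural fact I would build on is that each $\ell(\cdot;\zb)$ is $L$-smooth and nonnegative by \Cref{assum: loss}, which gives the self-bounding inequality $\norm{\nabla\ell(\wb;\zb)}^2 \le 2L\,\ell(\wb;\zb)$: evaluate smoothness at $\wb-\tfrac1L\nabla\ell(\wb;\zb)$ and use $\inf_{\ub}\ell(\ub;\zb)\ge 0$. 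This bound is pointwise in $\wb$ and $\zb$, so it remains valid at the random iterate $\wb=\wb_{t,S}$ despite the dependence of $\wb_{t,S}$ on $\zb_1$; in particular, no independence between the iterate and the evaluated sample is required (in contrast to the analysis of \cite{Kuzborskij_2017}).

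Next I would eliminate the special role of index $1$ by symmetry. Since SGD samples indices uniformly and the $\zb_k$ are i.i.d., the law of $(\wb_{t,S},\zb_k)$ is the same for every $k$, so combining exchangeability with the self-bounding inequality yields
\[
\Ebb_{\bm{\xi},S}\big[\norm{\nabla\ell(\wb_{t,S};\zb_1)}^2\big]
= \Ebb_{\bm{\xi},S}\Big[\tfrac1n\sum_{k=1}^n\norm{\nabla\ell(\wb_{t,S};\zb_k)}^2\Big]
\le 2L\,\Ebb_{\bm{\xi},S}\big[f_S(\wb_{t,S})\big].
\]
Everything then reduces to controlling the expected empirical risk along the optimization path.

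The core step is a standard SGD descent estimate for this last quantity. Starting from the descent lemma for the $L$-smooth $f_S$ applied to $\wb_{t+1}=\wb_t-\alpha_t\nabla\ell(\wb_t;\zb_{\xi_t})$, I would take the conditional expectation over $\xi_t$, using unbiasedness $\Ebb_{\xi_t}[\nabla\ell(\wb_t;\zb_{\xi_t})]=\nabla f_S(\wb_t)$ and the variance bound $\Ebb_{\xi_t}\norm{\nabla\ell(\wb_t;\zb_{\xi_t})}^2\le \norm{\nabla f_S(\wb_t)}^2+\nu_S^2$ from \Cref{assum: var}. Because $\alpha_t\le \tfrac{c}{t+2}<\tfrac1L$ forces $1-\tfrac{L\alpha_t}{2}>0$, the descent term is nonpositive and may be dropped, leaving $\Ebb_{\xi_t}[f_S(\wb_{t+1})]\le f_S(\wb_t)+\tfrac{L\alpha_t^2}{2}\nu_S^2$. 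Telescoping over $j=0,\dots,t-1$ and taking expectation over $S$ (using $\Ebb_S[f_S(\wb_{0})]=f(\wb_{0})$) gives $\Ebb_{\bm{\xi},S}[f_S(\wb_{t,S})]\le f(\wb_{0})+\tfrac{L}{2}\Ebb_S[\nu_S^2]\sum_{j=0}^{t-1}\alpha_j^2$. Inserting this into the previous display produces $2Lf(\wb_{0})+L^2\big(\sum_{j}\alpha_j^2\big)\Ebb_S[\nu_S^2]$, after which Jensen finishes.

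I expect the main obstacle to be the descent step: correctly accumulating the per-iteration variance contributions while verifying that the gradient-descent term is genuinely discardable, and then controlling the step-size series tightly enough to reach the stated coefficient $\tfrac12$. Since $\sum_{j\ge0}\alpha_j^2\le c^2\sum_{j\ge0}(j+2)^{-2}$ is a finite absolute constant and $c<\tfrac1L$, the factor $L^2\sum_j\alpha_j^2$ is a bounded constant strictly below one; pinning it to the value $\tfrac12$ claimed in the statement is exactly the place where the step-size decay and the restriction $c<\tfrac1L$ must be tracked most carefully, and is the only genuinely delicate constant in the argument.
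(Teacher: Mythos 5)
Your proposal is correct and follows essentially the same route as the paper's proof: the self-bounding inequality $\norm{\nabla\ell(\wb;\zb)}^2 \le 2L\,\ell(\wb;\zb)$ for nonnegative smooth losses, exchangeability of $(\wb_{t,S},\zb_k)$ over $k$ to replace $\zb_1$ by the empirical risk $f_S(\wb_{t,S})$, the standard descent estimate with the variance bound of \Cref{assum: var} and the dropped negative gradient term, telescoping, and Jensen. Your closing caveat about the coefficient is apt: the paper bounds the series $\sum_{t'\ge 0}(t'+2)^{-2}$ by $\tfrac12$ via an integral comparison, whereas it actually equals $\pi^2/6-1\approx 0.645$, so for $cL$ near $1$ the stated constant $\tfrac12$ holds only up to this benign constant-level slack --- in your argument and in the paper's alike.
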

\begin{proof}[Proof of \Cref{lemma: grad_bound}]
	By \Cref{assum: loss}, $\ell$ is nonnegative and $\nabla\ell$ is $L$-Lipschitz. Then, eq. (12.6) of \cite{Shalev-Shwartz_2014} shows that
	\begin{align} 
	\forall \wb, \quad \norm{\nabla \ell (\wb; \zb)} \le \sqrt{2L \ell(\wb; \zb)}. \label{eq: 33}
	\end{align}
	Based on \cref{eq: 33}, we further obtain that
	\begin{align}
	\Ebb_{\bm{\xi}, S} \norm{\nabla \ell (\wb_{t, S}; \zb_{1})} 
	&\le \sqrt{2L} \Ebb_{\bm{\xi}, S} \sqrt{\ell(\wb_{t, S}; \zb_{1})} \overset{(i)}{\le} \sqrt{2L}  \sqrt{\Ebb_{\bm{\xi}, S}\ell(\wb_{t, S}; \zb_{1})} \nonumber\\
	&\overset{(ii)}{\le} \sqrt{2L}  \sqrt{\Ebb_{\bm{\xi}, S} \frac{1}{n}\sum_{j=1}^{n}\ell(\wb_{t, S}; \zb_{j})} = \sqrt{2L}  \sqrt{\Ebb_{\bm{\xi}, S} f_S(\wb_{t, S})}, \label{eq: 34}
	\end{align}
	where (i) uses the Jesen's inequality and (ii) uses the fact that all samples in $S$ are generated i.i.d.\ from $\mathcal{D}$. 
	
	Next, consider a fixed data set $S$ and denote $\gb_{t, S} = \nabla \ell (\wb_{t, S}; \zb_{\xi_t})$ as the sampled stochastic gradient at iteration $t$. Then, by smoothness of $\ell$ and the update rule of the SGD, we obtain that
	\begin{align}
	f_S(\wb_{t+1, S}) - f_S(\wb_{t, S}) &\le \inner{\wb_{t+1, S} - \wb_{t, S}}{\nabla f_S(\wb_{t, S})} + \frac{L}{2} \norm{\wb_{t+1, S} - \wb_{t, S}}^2 \nonumber\\
	&= \inner{-\alpha_t \gb_{t, S} }{\nabla f_S(\wb_{t, S})} + \frac{L\alpha_t^2}{2} \norml{\gb_{t, S}}^2. \nonumber 
	\end{align}
	Conditioning on $\wb_{t, S}$ and taking expectation with respect to $\bm{\xi}$, we further obtain from the above inequality that
	\begin{align}
	\Ebb_{\bm{\xi}} &\left[f_S(\wb_{t+1, S}) - f_S(\wb_{t, S}) |\wb_{t, S}\right] \nonumber \\
	& \le \left(\frac{L\alpha_t^2}{2} - \alpha_t \right) \norml{\nabla f_S(\wb_{t, S})}^2 + \frac{L\alpha_t^2}{2}  \Ebb_{\bm{\xi}}\left[\norml{\gb_{t, S}}^2 - \norml{\nabla f_S(\wb_{t, S})}^2|\wb_{t, S}\right]. \label{eq: 37}
	\end{align}
Note that $\frac{L\alpha_t^2}{2} - \alpha_t < 0$ by our choice of stepsize. Further taking expectation with respect to the randomness of $\wb_{t, S}$ and $S$, and telescoping the above inequality over $0, \ldots, t-1$, we obtain that 
	\begin{align}
	\Ebb_{\bm{\xi}, S} \left[f_S(\wb_{t, S})\right] 
	&\overset{(i)}{\le} \Ebb_{S}f_S(\wb_{0}) + \sum_{t'=0}^{t-1} \frac{L\alpha_{t'}^2}{2} \Ebb_{S}[\nu_S^2] \nonumber\\
	&= f(\wb_{0}) + \sum_{t'=0}^{t-1} \frac{Lc^2\Ebb_{S}[\nu_S^2]}{2(t'+2)^2}  \overset{(ii)}{\le} f(\wb_{0}) + \frac{Lc^2\Ebb_{S}[\nu_S^2]}{4}, \nonumber
	\end{align}
	where (i) uses the fact that the variance of the stochastic gradients is bounded by $\Ebb_{S}[\nu_S^2]$, and (ii) upper bounds the summation by the integral, i.e., $ \sum_{t'=0}^{t-1} \frac{1}{(t'+2)^2} \lesssim \int_{1}^{t} \frac{1}{t'^2} dt'$.
	Substituting the above result into \cref{eq: 34} and noting that $cL \le 1$, we obtain the desired result.
\end{proof}


Now by \Cref{lemma: GE_general}, we obtain that
\begin{align}
\Ebb_{S, \overline{S}, \bm{\xi}} [\delta_{t+1, S, \overline{S}}] 
&\le (1 + \alpha_t L) \Ebb_{S, \overline{S}, \bm{\xi}} [\delta_{t, S, \overline{S}}] + \frac{2\alpha_t}{n}\Ebb_{S, \bm{\xi}} \left[\norml{\nabla \ell (\wb_{t, S}; \zb_{1})}\right] \nonumber\\
&\overset{(i)}{\le} (1 + \alpha_t L) \Ebb_{S, \overline{S}, \bm{\xi}} [\delta_{t, S, \overline{S}}]  + \frac{2\alpha_t  \sqrt{2Lf(\wb_{0}) + \frac{\Ebb_{S}[\nu_S^2]}{2}}}{n}, \label{eq: 5}
\end{align}
where (i) applies \Cref{lemma: grad_bound}.
Recursively applying \cref{eq: 5} over $t = 0, \ldots, T-1$ and noting that $\delta_{0} = 0$ and $\alpha_t = \frac{c}{(t+2)\log(t+2)}$, we obtain
\begin{align}
\Ebb_{S, \overline{S}, \bm{\xi}} [\delta_T] 
&\le  \sum_{t=0}^{T-1} \left[\prod_{k=t+1}^{T-1} (1+\alpha_k L)  \right] \frac{2c \sqrt{2Lf(\wb_{0}) + \frac{\Ebb_{S}[\nu_S^2]}{2}}}{(t+2){\log (t+2)}n} \nonumber\\
&\overset{(i)}{\le} \sum_{t=0}^{T-1} \left[\exp\left( \sum_{k=t+1}^{T-1} \frac{cL}{(k+2) \log (k+2)}\right) \right] \frac{2c \sqrt{2Lf(\wb_{0}) + \frac{\Ebb_{S}[\nu_S^2]}{2}}}{(t+2){\log (t+2)}n} \nonumber\\
&\overset{(ii)}{\le} \sum_{t=0}^{T-1} \left( \frac{\log T}{\log (t+2)} \right)^{cL}  \frac{2c \sqrt{2Lf(\wb_{0}) + \frac{\Ebb_{S}[\nu_S^2]}{2}}}{(t+2){\log (t+2)}n} \nonumber\\
&\overset{(iii)}{\le} \frac{2c \sqrt{2Lf(\wb_{0}) + \frac{\Ebb_{S}[\nu_S^2]}{2}}}{n} \log T, \nonumber
\end{align}
where (i) uses the fact that $1+x \le \exp(x)$. For (ii) and (iii), we apply the integral upper bounds to bound the summations, i.e., $\sum_{k=t+1}^{T-1} \frac{cL}{(k+2) \log (k+2)} \lesssim \int_{t}^{T} \frac{cL}{k\log k} d k, \sum_{t=0}^{T-1} (t+2)^{-1} \log^{-1-cL}(t+2) \lesssim \int_{t=1}^{T} t^{-1} \log^{-1-cL}t d t$, and use the fact that $cL<1$. Substituting the above result into \Cref{thm: stability} and applying the Chebyshev's inequality yields the desired result.

\subsection*{Proof of \Cref{thm: gene_gd}}

We first prove a useful lemma.
\begin{lemma}\label{lemma: grad_bound_GD}
	Let Assumptions \ref{assum: loss} and \ref{assum: var} hold. Apply the SGD to solve the ERM with data set $S$, where $f_S$ is $\gamma$-gradient dominant ($\gamma<L$) with the minimum function value $f_S^*$. Suppose we choose $\alpha_t \le \frac{c}{t+2}$ for some $0<c<\min\{\frac{2}{\gamma}, \frac{1}{L}\}$. Then the following bound holds.
\begin{align}
	&\Ebb_{\bm{\xi}, S} \left[\norm{\nabla \ell (\wb_{t, S}; \zb_{1})}\right] \le \sqrt{2L \Ebb_{S} [f_S^*] + \frac{1}{t} \left(2Lf(\wb_{0}) + 2\Ebb_S[\nu_S^2]  \right)}. \nonumber
	\end{align}
\end{lemma}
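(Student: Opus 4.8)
The plan is to reduce the bound on the stochastic gradient norm to a bound on the on-average empirical optimality gap, and then to exploit the gradient dominance condition to show that this gap decays at the rate $1/t$.

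First I would reuse, verbatim, the opening step of \Cref{lemma: grad_bound}: the nonnegativity and $L$-smoothness of $\ell$ in \Cref{assum: loss} give $\norm{\nabla \ell(\wb;\zb)} \le \sqrt{2L\,\ell(\wb;\zb)}$, and combining this with Jensen's inequality and the i.i.d.\ structure of the samples in $S$ yields $\Ebb_{\bm{\xi},S}\norm{\nabla \ell(\wb_{t,S};\zb_1)} \le \sqrt{2L\,\Ebb_{\bm{\xi},S} f_S(\wb_{t,S})}$. Writing $\Ebb_{\bm{\xi},S} f_S(\wb_{t,S}) = \Ebb_S[f_S^*] + D_t$, where $D_t := \Ebb_{\bm{\xi},S}[f_S(\wb_{t,S}) - f_S^*]$, reduces the whole task to proving the $\Oc(1/t)$ estimate $2L\,D_t \le \tfrac{1}{t}\big(2Lf(\wb_{0}) + 2\Ebb_S[\nu_S^2]\big)$ on the on-average optimality gap.

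Next I would derive a recursion for $D_t$. For fixed $S$, the smoothness descent inequality together with the SGD update (as in \cref{eq: 37}), the variance bound $\Ebb_{\bm{\xi}}[\norm{\gb_{t,S}}^2 - \norm{\nabla f_S(\wb_{t,S})}^2 \mid \wb_{t,S}] \le \nu_S^2$ from \Cref{assum: var}, and $L\alpha_t \le 1$, give the per-step decrease $\Ebb_{\bm{\xi}}[f_S(\wb_{t+1,S}) \mid \wb_{t,S}] - f_S(\wb_{t,S}) \le -\tfrac{\alpha_t}{2}\norm{\nabla f_S(\wb_{t,S})}^2 + \tfrac{L\alpha_t^2}{2}\nu_S^2$. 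The new ingredient relative to \Cref{lemma: grad_bound} is the gradient dominance inequality $\norm{\nabla f_S(\wb)}^2 \ge 2\gamma(f_S(\wb) - f_S^*)$, which converts the descent into the affine, contractive recursion $D_{t+1} \le (1 - \gamma\alpha_t)D_t + \tfrac{L\alpha_t^2}{2}\Ebb_S[\nu_S^2]$ after averaging over the history and over $S$ (legitimate because the fixed-$S$ recursion is affine in the gap and $f_S^*$ is $\bm{\xi}$-independent); here $\gamma\alpha_t < 1$ since $\alpha_t \le c/2$ and $c < 2/\gamma$.

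Finally, with $\alpha_t = \tfrac{c}{t+2}$ the recursion becomes $D_{t+1} \le \big(1 - \tfrac{\gamma c}{t+2}\big)D_t + \tfrac{Lc^2}{2(t+2)^2}\Ebb_S[\nu_S^2]$, and I would resolve it by induction on the ansatz $D_t \le K/t$ with $K = f(\wb_{0}) + \Ebb_S[\nu_S^2]/L$: the base case follows from a single descent step together with $D_0 \le \Ebb_S f_S(\wb_{0}) = f(\wb_{0})$, and the inductive step from the displayed recursion; substituting $D_t \le K/t$ back into the reduction of the second paragraph then gives exactly the claimed bound. The main obstacle is precisely this last step, because extracting the clean $1/t$ rate for the \emph{last} iterate from a recursion with a $\Theta(1/t)$ contraction requires the effective contraction strength $\gamma c$ to outweigh the per-step $1/t$ ``budget''; the induction constant $K$ and the transient factor $\prod_k (1 - \gamma\alpha_k)$ must therefore be tuned carefully against the stepsize constant $c$, using integral estimates of the type employed after \cref{eq: 5}.
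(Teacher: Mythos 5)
Your reduction and your recursion coincide exactly with the paper's: the bound $\Ebb_{\bm{\xi},S}\norm{\nabla \ell(\wb_{t,S};\zb_1)} \le \sqrt{2L\,\Ebb_{\bm{\xi},S} f_S(\wb_{t,S})}$ is \cref{eq: 34_GD}, and the contraction $D_{t+1} \le (1-\gamma\alpha_t)D_t + \tfrac{L\alpha_t^2}{2}\Ebb_S[\nu_S^2]$ is the display following \cref{eq: 37_GD}. The gap is in your final step, and it is fatal rather than technical. With $\alpha_t = \tfrac{c}{t+2}$, the inductive step for the ansatz $D_t \le K/t$ requires, after multiplying through by $t(t+2)$, that
\begin{align}
\gamma c K \;\ge\; K\,\frac{t+2}{t+1} + \frac{Lc^2\,\Ebb_S[\nu_S^2]\,t}{2(t+2)}, \nonumber
\end{align}
hence asymptotically $(\gamma c - 1)K \ge \tfrac{Lc^2}{2}\Ebb_S[\nu_S^2]$. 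This forces $\gamma c > 1$, which the hypothesis $c < \min\{\tfrac{2}{\gamma},\tfrac{1}{L}\}$ does not supply, and which is outright violated in the regime where the lemma is actually invoked: \Cref{thm: gene_gd} takes $c < \min\{\tfrac{1}{L},\tfrac{1}{2\gamma}\}$, i.e.\ $\gamma c < \tfrac{1}{2}$ (and the paper's own proof of the lemma explicitly uses $c\gamma < 1/2$ at its step (ii)). For $\gamma c < 1$ no tuning of $K$ can rescue the induction: already the homogeneous recursion $D_{t+1} = (1-\tfrac{\gamma c}{t+2})D_t$ decays like $t^{-\gamma c}$, strictly slower than $1/t$, so the obstacle you flag at the end of your outline cannot be overcome by balancing constants.

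What the paper does instead is unroll the recursion, bound the products via $1-x \le \exp(-x)$ and integral comparison, $\prod_{t'\le t}(1-\alpha_{t'}\gamma) \lesssim t^{-c\gamma}$, and conclude $\Ebb_{\bm{\xi},S} f_S(\wb_{t,S}) \le \Ebb_S[f_S^*] + t^{-c\gamma}\big(f(\wb_0) + Lc^2\,\Ebb_S[\nu_S^2]\big)$ --- that is, it proves the lemma with $\tfrac{1}{t^{c\gamma}}$ in place of $\tfrac{1}{t}$. The $\tfrac{1}{t}$ in the lemma's display is in fact inconsistent with the paper's own proof, and the downstream argument (\cref{eq: 9} in the proof of \Cref{thm: gene_gd}) uses the $t^{-c\gamma}$ form. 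So you should not attempt to establish the stated $1/t$ rate: under the standing stepsize conditions the recursion only yields $t^{-c\gamma}$, and replacing your induction with the paper's product-unrolling (or, equivalently, running an induction on the ansatz $D_t \le K t^{-c\gamma}$) recovers exactly what is needed and used later.
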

\begin{proof}[Proof of \Cref{lemma: grad_bound_GD}]
We first note that \cref{eq: 34} and \cref{eq: 37} both hold here, which we rewritten below for convenience.
\begin{align}
	\Ebb_{\bm{\xi}, S} \norm{\nabla \ell (\wb_{t, S}; \zb_{1})} &\le \sqrt{2L}  \sqrt{\Ebb_{\bm{\xi}, S} f_S(\wb_{t, S})}, \label{eq: 34_GD}
	\end{align}
\begin{align}
	\Ebb_{\bm{\xi}} &\left[f_S(\wb_{t+1, S}) - f_S(\wb_{t, S}) |\wb_{t, S}\right] \nonumber \\
	& \le \left(\frac{L\alpha_t^2}{2} - \alpha_t \right) \norml{\nabla f_S(\wb_{t, S})}^2 + \frac{L\alpha_t^2}{2}  \Ebb_{\bm{\xi}}\left[\norml{\gb_{t, S}}^2 - \norml{\nabla f_S(\wb_{t, S})}^2|\wb_{t, S}\right]. \label{eq: 37_GD}
	\end{align}
	
Following from \cref{eq: 37_GD} and the fact that $f_S$ is $\gamma$-gradient dominant, we obtain	
\begin{align}
	&\Ebb_{\bm{\xi}} \left[f_S(\wb_{t+1, S}) - f_S(\wb_{t, S}) ~|~ \wb_{t, S}\right] \nonumber\\
	&\le \left(\frac{L\alpha_t^2}{2} - \alpha_t \right)2\gamma (f_S(\wb_{t, S}) - f_S^*)  + \frac{L\alpha_t^2}{2}  \Ebb_{\bm{\xi}}\left[\norml{g_{t, S}}^2 - \norml{\nabla f_S(\wb_{t, S})}^2~|~\wb_{t, S}\right].
	\end{align}
	Further taking expectation with respect to the randomness of $\wb_{t, S}$ and $S$, we obtain from the above inequality that
	\begin{align}
	\Ebb_{\bm{\xi}, S} \left[f_S(\wb_{t+1, S}) - f_S(\wb_{t, S}) \right] &\le \left(\frac{L\alpha_t^2}{2} - \alpha_t \right)2\gamma \Ebb_{\bm{\xi}, S}(f_S(\wb_{t, S}) - f_S^*)  + \frac{L\alpha_t^2}{2}  \Ebb_{S}[\nu_S^2] \nonumber\\
	&\le -\alpha_{t}\gamma\Ebb_{\bm{\xi}, S}(f_S(\wb_{t, S}) - f_S^*)+ \frac{L\alpha_t^2\Ebb_{S}[\nu_S^2]}{2}, \nonumber
	\end{align}
	where the last inequality uses the fact that $\frac{L\alpha_t^2}{2} \le \alpha_{t} /2$ for $c<\frac{1}{L}$.
	Rearranging the above inequality, we further obtain that
	\begin{align}
	\Ebb_{\bm{\xi}, S} \left[f_S(\wb_{t+1, S}) - f_S^* \right] 
	&\le (1-\alpha_{t} \gamma) \Ebb_{\bm{\xi}, S}(f_S(\wb_{t, S}) - f_S^*) + \frac{L\alpha_t^2\nu^2}{2} \nonumber\\
	&\le \prod_{t'=0}^{t}(1-\alpha_{t'}\gamma)\Ebb_{S}(f_S(\wb_{0}) - f_S^*) + \sum_{t'=0}^{t} \prod_{k=t' + 1}^{t-1}(1-\alpha_k\gamma) \frac{L\alpha_{t'}^2\Ebb_{S}[\nu_S^2]}{2}  \nonumber\\
	&\overset{(i)}{\le} t^{-c\gamma} \Ebb_{S}(f_S(\wb_{0}) - f_S^*) + \frac{Lc^2\Ebb_{S}[\nu_S^2]}{t^{c\gamma}} \nonumber\\
	&\overset{(ii)}{\le} \frac{1}{t^{c\gamma}} \left[f(\wb_{0}) +  Lc^2\Ebb_{S}[\nu_S^2]\right], \nonumber
	\end{align}
	where (i) uses the fact that $1-x\le \exp(-x)$ and upper bounds the summations by the corresponding integrals, i.e., $\exp(-c\gamma\sum_{t'=0}^{t} \frac{1}{t'+2}) \lesssim \exp(-c\gamma \int_{0}^{t} \frac{1}{t'}dt') $ and (ii) uses the fact that $c\gamma < 1/2$.
	We then conclude that 
	$$\Ebb_{\bm{\xi}, S} f_S(\wb_{t, S}) \le \Ebb_{S} [f_S^*] + \frac{1}{t^{c\gamma}} \left[f(\wb_{0}) +  L\nu^2c^2\right].$$ Substituting this bound into \cref{eq: 34_GD} and noting that $cL\le 1$, we obtain the desired result.
\end{proof}

To continue our proof, by \Cref{lemma: GE_general},
we obtain that
\begin{align}
\Ebb_{S, \overline{S}, \bm{\xi}} [\delta_{t+1, S, \overline{S}}] 
&\le (1 + \alpha_t L) \Ebb_{S, \overline{S}, \bm{\xi}} [\delta_{t, S, \overline{S}}] + \frac{2\alpha_t}{n}\Ebb_{S, \bm{\xi}} \left[\norml{\nabla \ell (\wb_{t, S}; \zb_{1})}\right] \nonumber\\
&\le  (1 + \alpha_t L) \Ebb_{S, \overline{S}, \bm{\xi}} [\delta_{t, S, \overline{S}}] +\frac{2\alpha_t }{n}\sqrt{2L \Ebb_{S} [f_S^*] + \frac{1}{t^{c\gamma}} \left(2Lf(\wb_{0}) +  2\Ebb_S[\nu_S^2]  \right)},  \label{eq: 9}
\end{align}
where the last line applies \Cref{lemma: grad_bound_GD}.
Applying \cref{eq: 9} recursively over $t = 0, \ldots, T-1$ and noting that $\delta_{0} = 0, \alpha_t = \frac{c}{(t+2)\log(t+2)}$, we obtain that
\begin{align}
\Ebb_{S, \overline{S}, \bm{\xi}} [\delta_T ] 
&\le \sum_{t=0}^{T-1} \left[\prod_{k=t+1}^{T-1} (1+\alpha_k L)  \right] \frac{2c }{(t+2)\log(t+2)n}\sqrt{2L \Ebb_{S} [f_S^*] + \frac{1}{t^{c\gamma}} \left(2Lf(\wb_{0}) +  2\Ebb_S[\nu_S^2]  \right)} \nonumber\\
&\le \frac{2c}{n}\sum_{t=0}^{T-1} \left( \frac{\log T}{\log (t+2)} \right)^{cL} \frac{\sqrt{2L \Ebb_{S} [f_S^*]} + \sqrt{\frac{1}{t^{c\gamma}} \left(2Lf(\wb_{0}) +  2\Ebb_S[\nu_S^2] \right)}}{(t+2)\log(t+2)} \nonumber\\
&\le \frac{2c}{n} \left( \sqrt{2L \Ebb_{S} [f_S^*]}\log T + \sqrt{2Lf(\wb_{0}) +  2\Ebb_S[\nu_S^2]}\right). \nonumber
\end{align}

Substituting the above result into \Cref{thm: stability} yields the desired result.

\subsection{Proof of \Cref{thm: improve_gene}}
Consider the fixed data sets $S$ and $\overline{S}$ that are differ at the first sample. At the $t$-th iteration, if $1 \notin {\xi_t}$ (w.p. $\frac{n-1}{n}$), we obtain that  
\begin{align}
\delta_{t+1, S, \overline{S}} 
&= \norml{\prox{\alpha_t h}\left(\wb_{t, S} - \alpha_t \nabla \ell (\wb_{t, S}; \zb_{\xi_t})\right) - \prox{\alpha_t h}\left(\wb_{t,\overline{S}} - \alpha_t \nabla \ell (\wb_{t, \overline{S}}; \zb_{\xi_t})\right)} \nonumber\\
&\overset{(i)}{\le} \frac{1}{1 + \alpha_t \lambda} \norml{\wb_{t, S} - \alpha_t \nabla \ell (\wb_{t, S}; \zb_{\xi_t}) - \wb_{t,\overline{S}} + \alpha_t  \nabla \ell (\wb_{t, \overline{S}}; \zb_{\xi_t})} \nonumber\\
&\le  \frac{1 + \alpha_t L}{1 + \alpha_t \lambda} \delta_{t, S, \overline{S}}, \label{eq: 29}
\end{align}
where (i) uses item 2 of \Cref{prop: 1}. On the other hand, if $1 \in {\xi_t}$ (w.p. $\frac{1}{n}$), we obtain that
\begin{align}
\delta_{t+1, S, \overline{S}} 
&= \bigg\| \prox{\alpha_t h}\left(\wb_{t, S} - \alpha_t  \nabla \ell (\wb_{t, S}; \zb_{1})\right) - \prox{\alpha_t h} \left(\wb_{t,\overline{S}} - \alpha_t \nabla \ell (\wb_{t, \overline{S}}; \zb_{1}') \right) \bigg\| \nonumber\\
&\overset{(i)}{\le} \frac{1}{1+\alpha_t \lambda} \norml{\wb_{t, S} - \alpha_t  \nabla \ell (\wb_{t, S}; \zb_{1}) - \wb_{t,\overline{S}} + \alpha_t \nabla \ell (\wb_{t, \overline{S}}; \zb_{1}')} \nonumber\\
&\le \frac{1}{1+\alpha_t \lambda} \delta_{t, S, \overline{S}} + \frac{\alpha_t}{1+\alpha_t \lambda}\left(\norm{\nabla \ell (\wb_{t, S}; \zb_{1})} + \norm{\nabla \ell (\wb_{t, \overline{S}}; \zb_{1}')} \right),  \label{eq: 28}
\end{align}
where (i) uses item 2 of \Cref{prop: 1}. Combining the above two cases and taking expectation with respect to the randomness of $\bm{\xi}$, $S$ and $\overline{S}$, we obtain that
\begin{align}
	\Ebb_{S, \overline{S}, \bm{\xi}} [\delta_{t+1, S, \overline{S}}] 
	&\le \left[\frac{n-1}{n}\frac{1 + \alpha_t L}{1+\alpha_t \lambda} + \frac{1}{n}\frac{1}{1+\alpha_t \lambda}  \right] 	\Ebb_{S, \overline{S}, \bm{\xi}} [\delta_{t, S, \overline{S}}] + \frac{1}{n}  \frac{2\alpha_t}{1+\alpha_t \lambda}\Ebb_{S, \bm{\xi}} \norm{\nabla \ell (\wb_{t, S}; \zb_{1})} \nonumber\\
	&\overset{(i)}{\le} \frac{1 + \alpha_t L}{1+\alpha_t \lambda} \Ebb_{S, \overline{S}, \bm{\xi}} [\delta_{S, \overline{S}, \bm{\xi}}] +  \frac{2\alpha_t}{n}\frac{1}{1+\alpha_t \lambda} \sqrt{2L\Phi(\wb_{0}) + 2\Ebb_{S}[\nu_S^2] \log t} \nonumber\\
	&\lesssim \exp(\alpha_t (L-\lambda)) \Ebb_{S, \overline{S}, \bm{\xi}} [\delta_{S, \overline{S}, \bm{\xi}}] + \frac{2\alpha_t}{n} \sqrt{2L\Phi(\wb_{0}) + 2\Ebb_{S}[\nu_S^2] \log t}, \nonumber
\end{align}
where (i) uses \Cref{lemma: grad_bound2}.
Recursively applying the above inequality over $t = 0, \ldots, T-1$ and noting that $\delta_{0} = 0, \alpha_{t} = \frac{c}{t+2}$, we obtain that
\begin{align}
\Ebb_{S, \overline{S}, \bm{\xi}} [\delta_{T, S, \overline{S}}] 
&\le  \sum_{t=0}^{T-1} \left[\prod_{k=t+1}^{T-1} \exp(\alpha_k (L-\lambda))  \right] \frac{2c \sqrt{2L\Phi(\wb_{0}) + 2\Ebb_{S}[\nu_S^2] \log t}}{(t+2)n} \nonumber\\
&\overset{(i)}{\le} \sum_{t=0}^{T-1} \left(\frac{t+2}{T}\right)^{c(\lambda - L)} \frac{2c \sqrt{2L\Phi(\wb_{0}) + 2\Ebb_{S}[\nu_S^2]}}{(t+2)n}  \log t\nonumber\\
&\overset{(ii)}{\le} \frac{2}{n(\lambda - L)} \sqrt{2L\Phi(\wb_{0}) + 2\Ebb_{S}[\nu_S^2]},\nonumber
\end{align}
where the $\log t$ term in (i) is ignored as it is order-wise smaller than other polynomial terms (In particular, for any $\delta > 0$ we have $\lim_{t\to \infty} \log t / t^{\delta} = 0$), and (ii) further upper bounds the summation with the integral, i.e., $\sum_{t=0}^{T-1} (t+2)^{c(\lambda - L) - 1} \lesssim \int_{1}^{T} t^{c(\lambda - L) - 1} dt$, and uses the fact that $c<\frac{1}{L}$. Then, applying \Cref{thm: stability} to the regularized risk minimization, we further obtain that
\begin{align}
	&\Ebb_{\bm{\xi}, S} \left[|\Phi_S (\wb_{T,S}) - \Phi (\wb_{T,S})|^2\right] 
	\le \frac{1}{n} \left( 2M^2  + \frac{24M\sigma}{(\lambda - L)} \sqrt{L\Phi(\wb_{0}) + \Ebb_{S}[\nu_S^2]} \right). \nonumber 
\end{align}
The desired result then follows by applying Chebyshev's inequality.

\subsection{Proof of \Cref{thm: high_1}}
The idea of the proof is to apply \Cref{thm: eliss} by developing the uniform stability bounds $\beta$ and $\gamma$. The proof also applies two useful lemmas on the proximal SGD.

We first evaluate $\beta$. Following the proof logic of \Cref{thm: improve_gene} and replacing the bound for the on-average stochastic gradient norm $\Ebb_{S, \bm{\xi}} \norm{\nabla \ell (\wb_{t, S}; \zb_{1})}$ with the uniform upper bound $\sigma$, we obtain that
\begin{align}
\sup_{S, \overline{S}, \zb} \Ebb_{\bm{\xi}} |\ell(\wb_{T, S}; \zb) - \ell(\wb_{T, \overline{S}}; \zb)| \le \sigma  \sup_{S, \overline{S}, \zb} \Ebb_{\bm{\xi}} [\delta_{T, S, \overline{S}}] \le \frac{2\sigma^2}{n(\lambda - L)}  := \beta. \nonumber
\end{align}
Next, we evaluate $\rho$. Consider any two sample paths $\bm{\xi}:=\{{\xi_1}, \ldots, {\xi_{t_0}}, \ldots, {\xi_{T-1}} \}$ and $\overline{\bm{\xi}}:= \{{\xi_1}, \ldots, {\xi_{t_0}'}, \ldots, {\xi_{T-1}}\}$, which are different at the $t_0$-th mini-batch. Note that
\begin{align}
 \sup_{\bm{\xi}, \overline{\bm{\xi}}, S, \zb} |\ell(\wb_{T, S, \bm{\xi}}; \zb) - \ell(\wb_{T, S, \overline{\bm{\xi}}}; \zb) | \le \sup_{\bm{\xi}, \overline{\bm{\xi}}, S, \zb}\sigma \norm{\wb_{T, S, \bm{\xi}} - \wb_{T, S, \overline{\bm{\xi}}}}.  \label{eq: 30}
\end{align}
Since the two sample paths only differ at the $t_0$-th iteration, we have that $\wb_{t, S, \bm{\xi}} - \wb_{t, S, \overline{\bm{\xi}}} = \zero$ for $t = 0, \ldots, t_0$. 
In particular, for $t = t_0$ we obtain that 
\begin{align}
&\norm{\wb_{t_0+1, S, \bm{\xi}} - \wb_{t_0+1, S, \overline{\bm{\xi}}}} \nonumber\\
&= \norml{\prox{\alpha_{t_0} h}\left(\wb_{t_0, S, \bm{\xi}} - \alpha_{t_0} \nabla \ell (\wb_{t_0, S, \bm{\xi}}; \zb_{\xi_{t_0}})\right) - \prox{\alpha_{t_0} h}\left(\wb_{t_0, S, \overline{\bm{\xi}}} - \alpha_{t_0} \nabla \ell (\wb_{t_0, S, \overline{\bm{\xi}}}; \zb_{\xi'_{t_0}})\right)} \nonumber\\
&\overset{(i)}{\le} \frac{1}{1 + \alpha_{t_0} \lambda} \norml{\wb_{t_0, S, \bm{\xi}} - \alpha_{t_0} \nabla \ell (\wb_{t_0, S, \bm{\xi}}; \zb_{\xi_{t_0}})- \wb_{t_0, S, \overline{\bm{\xi}}} + \alpha_{t_0} \nabla \ell (\wb_{t_0, S, \overline{\bm{\xi}}}; \zb_{\xi'_{t_0}})} \nonumber\\
&= \frac{1}{1 + \alpha_{t_0} \lambda} \norml{\alpha_{t_0} \nabla \ell (\wb_{t_0, S, \bm{\xi}}; \zb_{\xi_{t_0}})- \alpha_{t_0} \nabla \ell (\wb_{t_0, S, \overline{\bm{\xi}}}; \zb_{\xi'_{t_0}})} \nonumber\\
&\overset{(ii)}{\le} 2\alpha_{t_0}\sigma, \nonumber
\end{align}
where (i) uses \Cref{prop: 1} and (ii) uses the $\sigma$-bounded property of $\norm{\nabla \ell}$.
Now consider $t > t_0 + 1$. Note that in this case the sampled indices in $\bm{\xi}$ and $\overline{\bm{\xi}}$ are the same, and we further obtain that 
\begin{align}
&\norm{\wb_{t+1, S, \bm{\xi}} - \wb_{t+1, S, \overline{\bm{\xi}}}} \nonumber\\
&= \norml{\prox{\alpha_t h}\left(\wb_{t, S, \bm{\xi}} - \alpha_t \nabla \ell (\wb_{t, S, \bm{\xi}}; \zb_{\xi_t})\right) - \prox{\alpha_t h}\left(\wb_{t, S, \overline{\bm{\xi}}} + \alpha_t \nabla \ell (\wb_{t, S, \overline{\bm{\xi}}}; \zb_{\xi_t})\right)} \nonumber\\
&\le \frac{1}{1 + \alpha_t \lambda} \norml{\wb_{t, S, \bm{\xi}} - \alpha_t \nabla \ell (\wb_{t, S, \bm{\xi}}; \zb_{\xi_t}) - \wb_{t, S, \overline{\bm{\xi}}} + \alpha_t  \nabla \ell (\wb_{t, S, \overline{\bm{\xi}}}; \zb_{\xi_t})} \nonumber\\
&\le  \frac{1 + \alpha_t L}{1 + \alpha_t \lambda} \norm{\wb_{t, S, \bm{\xi}} - \wb_{t, S, \overline{\bm{\xi}}}} \lesssim \exp(-\alpha_t(\lambda - L)) \norm{\wb_{t, S, \bm{\xi}} - \wb_{t, S, \overline{\bm{\xi}}}}.  \nonumber
\end{align}
Telescoping over $t = t_0, \ldots, T-1$, we further obtain that 
\begin{align}
	\norm{\wb_{T, S, \bm{\xi}} - \wb_{T, S, \overline{\bm{\xi}}}} &\le 2\alpha_{t_0} \sigma \exp\left(-(\lambda - L)\sum_{t = t_0 + 1}^{T-1} \alpha_t \right) \nonumber\\
	&\lesssim \frac{2\sigma c}{(t_0+2)} \exp\left(-(\lambda - L) c \log \frac{T}{(t_0+2)} \right) \nonumber\\
	&=\frac{2\sigma c}{ (t_0+2)^{1 - c(\lambda - L)} T^{c(\lambda - L)}} \nonumber\\
	&\le \frac{2\sigma c}{T^{c(\lambda - L)}}. \nonumber
\end{align}
Thus, from \cref{eq: 30} we obtain that $\rho = \frac{2\sigma^2 c}{T^{c(\lambda - L)}} $. Substituting the expressions of $\beta$ and $\rho$ into \Cref{thm: eliss}, we conclude that with probability at least $1-\delta$
\begin{align}
\Phi(\wb_{T, S}) - \Phi_S(\wb_{T, S}) &\le \frac{4\sigma^2 }{n (\lambda - L)} + \left(\frac{M}{\sqrt{n}} 2\sqrt{n}\frac{2\sigma^2 }{n (\lambda - L)} + \sqrt{2T} \frac{2\sigma^2 c}{T^{c(\lambda - L)}} \right)\sqrt{\log\frac{2}{\delta}} \nonumber\\
&\le \left(\frac{M}{\sqrt{n}} + \frac{4\sigma^2}{\sqrt{n}(\lambda - L)} + \frac{4\sigma^2 c}{T^{c(\lambda - L) - \frac{1}{2}}} \right)\sqrt{\log\frac{2}{\delta}}. \nonumber
\end{align}

\section{Proof of Technical Lemmas for Proximal SGD}\label{sec: support}
For any vector $\gb\in \RR^d$, we define the following quantity:
	\begin{align}
	G^{\alpha}(\wb, \gb):= \tfrac{1}{\alpha}\left(\wb -  \prox{\alpha h}(\wb - \alpha \gb)\right).
	\end{align}
\begin{lemma}\label{prop: 1}
	Let $h$ be a convex and possibly non-smooth function. Then, the following statements hold.
	\begin{enumerate}[leftmargin=*,topsep=0pt,noitemsep]
		\item  For any $\wb, \gb_1, \gb_2 \in \Omega$, it holds that
		\begin{align}
		\norml{G^{\alpha}(\wb, \gb_1) - G^{\alpha}(\wb, \gb_2)} \le  \norml{\gb_1 - \gb_2}. \nonumber
		\end{align}
		\item If $h$ is $\lambda$ strongly convex, then for all $\wb, \vb\in \Omega$ and $\alpha > 0$, it holds that
		\begin{align}
		\norm{\prox{\alpha h}(\wb) - \prox{\alpha h}(\vb)} \le \tfrac{1}{1+\alpha \lambda} \norm{\wb - \vb}. \nonumber
		\end{align}
	\end{enumerate}
\end{lemma}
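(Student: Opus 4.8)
The plan is to base both claims on the first-order optimality characterization of the proximal mapping together with the (strong) monotonicity of the subdifferential of $h$. I would establish item 2 first, since item 1 is essentially its specialization to the merely convex case $\lambda = 0$.

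For item 2, write $p = \prox{\alpha h}(\wb)$ and $q = \prox{\alpha h}(\vb)$. These are the unique minimizers of the strongly convex objectives defining the proximal mapping, so the optimality conditions read $\tfrac{1}{\alpha}(\wb - p) \in \partial h(p)$ and $\tfrac{1}{\alpha}(\vb - q) \in \partial h(q)$. Invoking the $\lambda$-strong monotonicity of $\partial h$, which follows from $\lambda$-strong convexity of $h$, namely $\inner{s_p - s_q}{p - q} \ge \lambda \norm{p-q}^2$ for subgradients $s_p \in \partial h(p)$, $s_q \in \partial h(q)$, and substituting the two optimality relations gives $\tfrac{1}{\alpha}\inner{(\wb-\vb)-(p-q)}{p-q} \ge \lambda\norm{p-q}^2$. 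Rearranging isolates $\tfrac{1}{\alpha}\inner{\wb-\vb}{p-q} \ge (\lambda + \tfrac{1}{\alpha})\norm{p-q}^2$, and Cauchy--Schwarz on the left side (dividing by $\norm{p-q}$ when it is nonzero; the zero case is trivial) yields $\norm{p-q} \le \tfrac{1}{1+\alpha\lambda}\norm{\wb-\vb}$, which is the claim.

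For item 1, I would use that the proximal mapping of a convex $h$ is nonexpansive --- precisely the $\lambda = 0$ instance of the inequality just derived, so $\norm{\prox{\alpha h}(\xb) - \prox{\alpha h}(\yb)} \le \norm{\xb - \yb}$ for all $\xb, \yb$. Expanding the definition of $G^\alpha$ and cancelling the common $\wb$ term gives $G^\alpha(\wb,\gb_1) - G^\alpha(\wb,\gb_2) = \tfrac{1}{\alpha}\big[\prox{\alpha h}(\wb - \alpha\gb_2) - \prox{\alpha h}(\wb - \alpha\gb_1)\big]$. Applying nonexpansiveness at the two points $\wb - \alpha\gb_1$ and $\wb - \alpha\gb_2$ then bounds the norm by $\tfrac{1}{\alpha}\norm{(\wb-\alpha\gb_1)-(\wb-\alpha\gb_2)} = \norm{\gb_1 - \gb_2}$, as desired.

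The computations are routine convex analysis; the only points needing care are the passage from strong convexity of $h$ to strong monotonicity of $\partial h$ in the possibly non-smooth setting, and correctly writing the optimality condition of the proximal subproblem in subdifferential form. The crux is therefore the strong-monotonicity step, after which everything reduces to algebraic rearrangement and a single Cauchy--Schwarz estimate.
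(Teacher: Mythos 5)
Your proposal is correct and takes essentially the same route as the paper: for item 2 the paper likewise combines the prox optimality relation $\wb - \prox{\alpha h}(\wb) = \alpha \nabla h(\prox{\alpha h}(\wb))$ (stated via the resolvent representation $(I+\alpha\nabla h)^{-1}$) with $\lambda$-strong monotonicity, rearranges to $\inner{\wb-\vb}{\prox{\alpha h}(\wb)-\prox{\alpha h}(\vb)} \ge (1+\alpha\lambda)\norm{\prox{\alpha h}(\wb)-\prox{\alpha h}(\vb)}^2$, and finishes with Cauchy--Schwarz, while for item 1 it invokes nonexpansiveness of the proximal mapping exactly as you do. If anything, your subdifferential formulation $\tfrac{1}{\alpha}(\wb - p) \in \partial h(p)$ is the more careful version of the same step, since the paper's $\nabla h$ notation is a shortcut that is imprecise when $h$ is non-smooth.
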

\begin{proof}[Proof of \Cref{prop: 1}]
	Consider the first item. By definition, we have
	\begin{align}
	\norml{G^{\alpha}(\wb, \gb_1) - G^{\alpha}(\wb, \gb_2)} 
	&=\frac{1}{\alpha}\norml{\prox{\alpha h}(\wb - \alpha \gb_1) - \prox{\alpha h}(\wb - \alpha \gb_2)} \nonumber\\
	&\le \frac{1}{\alpha} \norml{(\wb - \alpha \gb_1) - (\wb - \alpha \gb_2)} \nonumber\\
	&= \norml{\gb_1 - \gb_2},
	\end{align}
	where the inequality uses the 1-Lipschitz property of the proximal mapping for convex functions.
	
	Next, consider the second item. Recall the resolvent representation \cite{Bauschke_2011} of the proximal mapping for convex functions, i.e.,
	\begin{align}
	\prox{\alpha h}(\wb) = (I + \alpha \nabla h)^{-1} (\wb), \nonumber
	\end{align}
	where $I$ denotes the identity operator. Applying the operator $(I + \alpha \nabla h)$ on both sides of the above equation, we obtain that
	$(I + \alpha \nabla h)(\prox{\alpha h}(\wb)) = \wb$. Thus, we conclude that
	\begin{align}
	\wb - \prox{\alpha h}(\wb) =  \alpha \nabla h(\prox{\alpha h}(\wb)), \nonumber
	\end{align}
	which further implies that
	\begin{align}
	&\inner{[\wb - \prox{\alpha h}(\wb)] - [\vb - \prox{\alpha h}(\vb)]}{\prox{\alpha h}(\wb) - \prox{\alpha h}(\vb)} \nonumber\\
	&\qquad=\alpha \inner{\nabla h(\prox{\alpha h}(\wb)) - \nabla h(\prox{\alpha h}(\vb))}{\prox{\alpha h}(\wb) - \prox{\alpha h}(\vb)}\nonumber\\
	&\qquad \ge \alpha\lambda \norm{\prox{\alpha h}(\wb) - \prox{\alpha h}(\vb)}^2, \nonumber
	\end{align}
	where the last inequality uses the fact that $h$ is $\lambda$-strongly convex. Rearranging the above inequality, we obtain that
	\begin{align}
	&\inner{\wb - \vb}{\prox{\alpha h}(\wb) - \prox{\alpha h}(\vb)} \nonumber\\
	&\qquad \ge (1+\alpha\lambda) \norm{\prox{\alpha h}(\wb) - \prox{\alpha h}(\vb)}^2. \nonumber
	\end{align}
	Applying Cauchy-Swartz inequality on the left hand side, we obtain the desired result. 
\end{proof}

\begin{lemma}\label{lemma: grad_bound2}
	Let Assumptions \ref{assum: loss}, \ref{assum: var} and \ref{assum: h} hold. Applying the proximal SGD to solve the R-ERM with data set $S$ and choosing $\alpha_t \le \frac{c}{t+2}$ with $0<c<\frac{1}{L}$. Then, it holds that
	\begin{align}
	&\Ebb_{S, \bm{\xi}} \left[\norm{\nabla \ell (\wb_{t, S}; \zb_{1})}\right] \le  \sqrt{2L\Phi(\wb_{0}) + 2\Ebb_{S}[\nu_S^2] \log t}. \nonumber
	\end{align}
\end{lemma}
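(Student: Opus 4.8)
The plan is to follow the template of \Cref{lemma: grad_bound}, but to replace the plain SGD descent by a \emph{proximal} descent carried out on the regularized empirical risk. First I would reuse the self-bounding inequality for nonnegative $L$-smooth losses, $\norm{\nabla\ell(\wb;\zb)}\le\sqrt{2L\,\ell(\wb;\zb)}$, together with Jensen's inequality and the i.i.d.\ structure of $S$, exactly as in \cref{eq: 34}, to get $\Ebb_{\bm\xi,S}\norm{\nabla\ell(\wb_{t,S};\zb_1)}\le\sqrt{2L}\,\sqrt{\Ebb_{\bm\xi,S}f_S(\wb_{t,S})}$. Since $h\ge0$ under \Cref{assum: h}, we have $f_S\le\Phi_S$, so it suffices to upper bound the regularized empirical risk $\Ebb_{\bm\xi,S}\Phi_S(\wb_{t,S})$ along the proximal-SGD trajectory.

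Second, I would derive a one-step proximal descent inequality. Writing $\gb_t=\nabla\ell(\wb_{t,S};\zb_{\xi_t})$ and the gradient mapping $G_t:=G^{\alpha_t}(\wb_{t,S},\gb_t)=\tfrac1{\alpha_t}(\wb_{t,S}-\wb_{t+1,S})$, the first-order optimality of the proximal step produces a subgradient $\eta_{t+1}=G_t-\gb_t$ of the regularizer at $\wb_{t+1,S}$. Adding the smoothness bound for $f_S$ to the first-order convexity bound for the regularizer and substituting $\wb_{t+1,S}-\wb_{t,S}=-\alpha_t G_t$ gives
\[
\Phi_S(\wb_{t+1,S})\le\Phi_S(\wb_{t,S})-\alpha_t\inner{\nabla f_S(\wb_{t,S})-\gb_t}{G_t}+\Big(\tfrac{L\alpha_t^2}{2}-\alpha_t\Big)\norm{G_t}^2,
\]
whose quadratic term is nonpositive because $\alpha_t<1/L$; note that only convexity of $h$ (not strong convexity) is needed here.

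The main obstacle is controlling the stochastic cross term $\Ebb_\xi\inner{\nabla f_S(\wb_{t,S})-\gb_t}{G_t}$. In plain SGD the stochastic gradient enters linearly and the noise surfaces only at order $\alpha_t^2$, but here $G_t$ depends on $\gb_t$ \emph{nonlinearly} through the proximal map, so $G_t$ is a biased surrogate for the deterministic mapping $\bar G_t:=G^{\alpha_t}(\wb_{t,S},\nabla f_S(\wb_{t,S}))$. I would split $G_t=\bar G_t+(G_t-\bar G_t)$: the $\bar G_t$ part has zero conditional mean since $\Ebb_\xi[\gb_t]=\nabla f_S(\wb_{t,S})$, while item 1 of \Cref{prop: 1} gives $\norm{G_t-\bar G_t}\le\norm{\gb_t-\nabla f_S(\wb_{t,S})}$, so Cauchy--Schwarz and \Cref{assum: var} bound the residual by $\Ebb_\xi\norm{\gb_t-\nabla f_S(\wb_{t,S})}^2\le\nu_S^2$. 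This is precisely where the variance enters at first order in $\alpha_t$, which explains the $\log t$ factor that is absent from \Cref{lemma: grad_bound}.

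Finally I would assemble the pieces: the conditional bound becomes $\Ebb_\xi[\Phi_S(\wb_{t+1,S})\mid\wb_{t,S}]\le\Phi_S(\wb_{t,S})+\alpha_t\nu_S^2$; taking full expectation over $\wb_{t,S}$ and $S$, telescoping from $0$ to $t-1$, and using $\Ebb_S\Phi_S(\wb_0)=\Phi(\wb_0)$ yield $\Ebb_{\bm\xi,S}\Phi_S(\wb_{t,S})\le\Phi(\wb_0)+\Ebb_S[\nu_S^2]\sum_{t'=0}^{t-1}\alpha_{t'}$. Bounding the step-size sum by an integral, $\sum_{t'=0}^{t-1}\tfrac{c}{t'+2}\lesssim c\log t$, and feeding this into the Jensen bound with $cL<1$ produces $\Ebb_{\bm\xi,S}\norm{\nabla\ell(\wb_{t,S};\zb_1)}\le\sqrt{2L\Phi(\wb_0)+2Lc\,\Ebb_S[\nu_S^2]\log t}\le\sqrt{2L\Phi(\wb_0)+2\Ebb_S[\nu_S^2]\log t}$, as claimed.
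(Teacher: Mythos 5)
Your proof is correct and follows essentially the same route as the paper: the self-bounding inequality $\norm{\nabla\ell(\wb;\zb)}\le\sqrt{2L\,\ell(\wb;\zb)}$ plus Jensen and $h\ge 0$ reduce the problem to bounding $\Ebb_{\bm\xi,S}\Phi_S(\wb_{t,S})$, the one-step proximal descent with the bias split $G_t=\bar G_t+(G_t-\bar G_t)$ --- zero conditional mean for the $\bar G_t$ part, and item 1 of \Cref{prop: 1} with \Cref{assum: var} for the residual --- gives the first-order $\alpha_t\nu_S^2$ term, and telescoping with $\sum_{t'=0}^{t-1}\alpha_{t'}\lesssim c\log t$ and $cL<1$ yields the claim. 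The only cosmetic difference is that you re-derive the key inequality $\inner{\gb_t}{G_t}\ge\norm{G_t}^2+\tfrac{1}{\alpha_t}\left(h(\wb_{t+1,S})-h(\wb_{t,S})\right)$ directly from the prox optimality condition, whereas the paper cites it as [Lemma 1, \cite{Ghadimi_2016}].
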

\begin{proof}[Proof of \Cref{lemma: grad_bound2}]
	The proof is based on the technical tools developed in \cite{Ghadimi_2016} for analyzing the optimization path of the proximal SGD. 
	Under the assumptions of the lemma, we first recall the following result from [Lemma 1, \cite{Ghadimi_2016}]: For any $\wb \in \Omega, \gb\in \RR^d$, it holds that
	\begin{align}
	\inner{\gb}{G^{\alpha}(\wb, \gb)} \ge \norm{G^{\alpha}(\wb, \gb)}^2 + \frac{1}{\alpha}\left(h(\prox{\alpha h}(\wb - \alpha \gb)) - h(\wb) \right). \nonumber
	\end{align}
	Denoting $\gb_{t,S} =\nabla \ell (\wb_{t, S}; \zb_{\xi_t})$ as the stochastic gradient sampled at iteration $t$ and setting $\wb = \wb_{t, S}, \gb = \gb_{t,S}$ in the above inequality, we obtain that
	\begin{align}
	\inner{\gb_{t,S}}{G^{\alpha_t}(\wb_{t, S}, \gb_{t,S})} \ge \norm{G^{\alpha}(\wb_{t, S}, \gb_{t,S})}^2 + \frac{1}{\alpha_t}\left(h(\wb_{t+1, S}) - h(\wb_{t,S}) \right). \label{eq: 36}
	\end{align}
	On the other hand, using \cref{eq: 33} and non-negativity of $h$, we obtain
	\begin{align}
	\Ebb_{\bm{\xi}, S} \norm{\nabla \ell (\wb_{t, S}; \zb_{1})} 
	&\le \sqrt{2L}  \sqrt{\Ebb_{\bm{\xi}, S} f_S(\wb_{t, S})} \le \sqrt{2L}  \sqrt{\Ebb_{\bm{\xi}, S} \Phi_S(\wb_{t, S})}. \label{eq: 39}
	\end{align}
	Next, consider a fixed $S$, by the smoothness of $\ell$ we obtain
	\begin{align}
	&f_S(\wb_{t+1, S}) - f_S(\wb_{t, S}) \nonumber\\
	&\le \inner{\wb_{t+1, S} - \wb_{t, S}}{\nabla f_S(\wb_{t, S})} + \frac{L}{2} \norm{\wb_{t+1, S} - \wb_{t, S}}^2 \nonumber\\
	&= \inner{-\alpha_t G^{\alpha_t}(\wb_{t, S}, \gb_{t,S}) }{\nabla f_S(\wb_{t, S})} + \frac{L\alpha_t^2}{2} \norml{G^{\alpha_t}(\wb_{t, S}, \gb_{t,S})}^2 \nonumber \\
	&= -\alpha_t \inner{G^{\alpha_t}(\wb_{t, S}, \gb_{t,S}) }{\gb_{t,S}} -\alpha_t \inner{G^{\alpha_t}(\wb_{t, S}, \gb_{t,S})}{\nabla f_S(\wb_{t,S}) - \gb_{t,S}} + \frac{L\alpha_t^2}{2} \norml{G^{\alpha_t}(\wb_{t, S}, \gb_{t,S})}^2 \nonumber \\
	&= -\alpha_t \inner{G^{\alpha_t}(\wb_{t, S}, \gb_{t,S})}{\gb_{t,S}} -\alpha_t \inner{G^{\alpha_t}(\wb_{t, S}, \nabla f_S(\wb_{t,S})) }{\nabla f_S(\wb_{t,S}) - \gb_{t,S}} + \frac{L\alpha_t^2}{2} \norml{G^{\alpha_t}(\wb_{t, S}, \gb_{t,S})}^2 \nonumber \\
	&\quad + \alpha_t \inner{ G^{\alpha_t}(\wb_{t, S}, \nabla f_S(\wb_{t,S})) - G^{\alpha_t}(\wb_{t, S}, \gb_{t,S})}{\nabla f_S(\wb_{t,S}) - \gb_{t,S}}.
	\end{align}
	Now combining with \cref{eq: 36} and rearranging, we obtain that 
	\begin{align}
	&\Phi_S(\wb_{t+1, S}) - \Phi_S(\wb_{t, S}) \nonumber\\
	&\le \left(\frac{L\alpha_t^2}{2} - \alpha_t\right) \norml{G^{\alpha_t}(\wb_{t, S}, \gb_{t,S})}^2
	-\alpha_t \inner{G^{\alpha_t}(\wb_{t, S}, \nabla f_S(\wb_{t,S})) }{\nabla f_S(\wb_{t,S}) - \gb_{t,S}}  \nonumber \\
	&\quad + \alpha_t \inner{ G^{\alpha_t}(\wb_{t, S}, \nabla f_S(\wb_{t,S})) - G^{\alpha_t}(\wb_{t, S}, \gb_{t,S}) }{\nabla f_S(\wb_{t,S}) - \gb_{t,S}} \nonumber\\
	&\le \left(\frac{L\alpha_t^2}{2} - \alpha_t\right) \norml{G^{\alpha_t}(\wb_{t, S}, \gb_{t,S})}^2
	-\alpha_t \inner{G^{\alpha_t}(\wb_{t, S}, \nabla f_S(\wb_{t,S}))}{\nabla f_S(\wb_{t,S}) - \gb_{t,S}}  \nonumber \\
	&\quad + \alpha_t \norm{G^{\alpha_t}(\wb_{t, S}, \nabla f_S(\wb_{t,S})) - G^{\alpha_t}(\wb_{t, S}, \gb_{t,S})} \norm{\nabla f_S(\wb_{t,S}) - \gb_{t,S}} \nonumber\\
	&\le \left(\frac{L\alpha_t^2}{2} - \alpha_t\right) \norml{G^{\alpha_t}(\wb_{t, S}, \gb_{t,S})}^2
	-\alpha_t \inner{G^{\alpha_t}(\wb_{t, S}, \nabla f_S(\wb_{t,S}))}{\nabla f_S(\wb_{t,S}) - \gb_{t,S}}+ \alpha_t \norm{\nabla f_S(\wb_{t,S}) - \gb_{t,S}}^2, \nonumber
	\end{align}
	where the last line uses item 1 of \Cref{prop: 1}.
	Conditioning on $\wb_{t, S}$, and taking expectation with respect to $\bm{\xi}$, we further obtain from the above inequality that
	\begin{align}
	\Ebb_{\bm{\xi}} &[\Phi_S(\wb_{t+1, S}) - \Phi_S(\wb_{t, S})~|~\wb_{t, S}] \nonumber \\
	& \le \left(\frac{L\alpha_t^2}{2} - \alpha_t\right) \Ebb_{\bm{\xi}} \left[\norml{G^{\alpha_t}(\wb_{t, S}, \gb_{t,S})}^2~|~\wb_{t, S}\right] + \alpha_t \Ebb_{\bm{\xi}} \left[\norm{\nabla f_S(\wb_{t,S}) - \gb_{t,S}}^2~|~\wb_{t, S}\right]. \nonumber
	\end{align}
	Further taking expectation with respect to the randomness of $\wb_{t, S}$ and $S$, telescoping the above inequality over $0, \ldots, t-1$ and noting that $\frac{L\alpha_t^2}{2} < \alpha_t$, we obtain that 
	\begin{align}
	\Ebb_{\bm{\xi}, S} \left[\Phi_S(\wb_{t, S})\right] 
	&\le \Ebb_{S} \Phi_S(\wb_{0}) + \sum_{t'=0}^{t-1} \frac{c\Ebb_{S}[\nu_S^2]}{t'+2} \nonumber\\
	&\le \Phi(\wb_{0}) + c\Ebb_{S}[\nu_S^2] \log t, \nonumber
	\end{align}
	where we have used the bound for the variance of the stochastic gradients.
	Substituting the above expression into \cref{eq: 39} and note that $cL<1$, we obtain the desired result.
\end{proof}

\end{document}